\documentclass[12pt]{article}
\usepackage[utf8]{inputenc}
\usepackage{cite}
\usepackage{amsfonts}
\usepackage{amsmath}
\usepackage{amsthm}
\usepackage{enumitem}
\usepackage{amssymb}
\usepackage{caption}
\usepackage{graphicx}
\usepackage{mathrsfs}
\usepackage{multirow}
\usepackage{xcolor}
\usepackage{setspace}
\usepackage{booktabs}
\usepackage{latexsym, mathdots, array, extarrows}
\usepackage{algpseudocode,float}
\usepackage{lipsum}
\usepackage{pifont}
\usepackage{diagbox}

\usepackage{amsmath,amsfonts,amsthm,amssymb}
\usepackage{bm,dsfont}
\usepackage{booktabs}
\usepackage{graphicx}
\usepackage{array}
\usepackage{xcolor}
\usepackage{sidecap}
\usepackage{listings}
\usepackage{times}
\usepackage{url}
\usepackage{hyperref}
\usepackage{subfigure}
\usepackage{makecell} 

\usepackage{algorithm}      % For the algorithm float environment
\usepackage{algpseudocode}  % For the algorithmic pseudocode syntax

\usepackage{cleveref}

\usepackage{geometry}
\geometry{a4paper,scale=0.8}

\newtheorem{definition}{Definition}
\newtheorem{theorem}{Theorem}

\newtheorem{lemma}{Lemma}

\newcommand{\ba}{\bm{a}}
\newcommand{\bc}{\bm{c}}
\newcommand{\bG}{\bm{G}}
\newcommand{\bD}{\bm{D}}

\newcommand{\bH}{\bm{H}}

\newcommand{\bx}{\bm{x}}
\newcommand{\bU}{\bm{U}}

\newcommand{\bA}{\bm{A}}
\newcommand{\bB}{\bm{B}}
\newcommand{\bu}{\bm{u}}
\newcommand{\by}{\bm{y}}

\newcommand{\bW}{\bm{W}}
\newcommand{\bV}{\bm{V}}
\newcommand{\bv}{\bm{v}}
\newcommand{\blam}{\bm{\lambda}}
\newcommand{\bSig}{\bm{\Sigma}}
\newcommand{\bsig}{\bm{\sigma}}
\newcommand{\Sl}{\mathcal{S}_{\ell}}

\allowdisplaybreaks[4]

\title{Automatic Rank Determination for Low-Rank Adaptation via Submodular Function Maximization
}

\author{Yihang Gao\thanks{Department of Mathematics, National University of Singapore. Email: gaoyh@nus.edu.sg}\and Vincent Y. F.  Tan\thanks{Department of Mathematics and Department of Electrical and Computer Engineering, National University of Singapore. Email: vtan@nus.edu.sg}}

\date{}

\begin{document}

\maketitle

\begin{abstract}
In this paper, we propose SubLoRA, a rank determination method for Low-Rank Adaptation (LoRA) based on submodular function maximization. In contrast to prior approaches, such as AdaLoRA, that rely on first-order (linearized) approximations of the loss function, SubLoRA utilizes second-order information to capture the potentially complex loss landscape by incorporating the Hessian matrix. We show that the linearization becomes inaccurate and ill-conditioned when the LoRA parameters have been well optimized, motivating the need for a more reliable and nuanced second-order formulation.
To this end, we reformulate the rank determination problem as a combinatorial optimization problem with a quadratic objective. However, solving this problem exactly is NP-hard in general. To overcome the computational challenge, we introduce a submodular function maximization framework and devise a greedy algorithm with approximation guarantees. We derive a sufficient and necessary condition under which the rank-determination objective becomes submodular, and construct a closed-form projection of the Hessian matrix that satisfies this condition while maintaining computational efficiency.
Our method combines solid theoretical foundations, second-order accuracy, and practical computational efficiency. We further extend SubLoRA to a joint optimization setting, alternating between LoRA parameter updates and rank determination under a rank budget constraint. 
Extensive experiments on fine-tuning physics-informed neural networks (PINNs) for solving partial differential equations (PDEs) demonstrate the effectiveness of our approach. Results show that SubLoRA outperforms existing methods in both rank determination and joint training performance.
\end{abstract}

\section{Introduction}
Low-rank adaptation (LoRA)~\cite{hu2022lora} has demonstrated promising performance as an efficient fine-tuning technique across a wide range of domains, including large language models~\cite{zhang2024personalized,liu2024alora,valipour2023dylora,ding2023parameter}, vision models~\cite{frenkel2024implicit,chen2022adaptformer,liang2025lorasculpt,jia2022visual,zhang2024neural,awais2025foundation}, and physics-informed neural networks~\cite{majumdar2023pihlora,majumdar2023hyperlora,wang2025transfer}. Rather than updating all model parameters during fine-tuning, LoRA introduces a parameter-efficient strategy by applying low-rank decomposition to selected weight matrices.
The standard formulation of LoRA adopts classical Burer–Monteiro factorization to parameter updates. To better capture structural dependencies in model weights, Edalati et al.~\cite{edalati2022krona} propose using Kronecker decomposition for the low-rank parameter updates. Tensorized LoRA further generalizes this idea by reshaping parameter matrices into higher-order tensors and applying low tensor-rank approximations. 
Depending on the underlying model architectures~\cite{gao2025low,ding2024lora,tjandra2018tensor} and the inherent structure of the data~\cite{bershatsky2024lotr,yang2024loretta}, different tensor decomposition techniques can be utilized to exploit the structural efficiency.  
A broader review of recent developments in LoRA, including algorithmic enhancements, theoretical analyses, and various applications, can be found in~\cite{zeng2024the,jang2024lora,wang2024blob,hayou2024lora+,han2024parameterefficient,mao2025survey,wang2025parameter,you2022ranking,dettmers2023qlora,schmirler2024fine,kim2025lora}.

However, rank determination in LoRA remains an underexplored but critical aspect of its effectiveness. Despite its importance, relatively few works have addressed this problem in depth. 
Most existing methods rely on singular value decomposition (SVD) to factorize parameter updates, motivated by the fundamental relationship between singular values and matrix rank. Specifically, when a singular value approaches zero, the corresponding LoRA component contributes negligibly and can be discarded.
Broadly, two main strategies have been proposed. The first involves Bayesian inference. 
Yang et al.~\cite{yang2024bayesian} treat singular values as random variables and estimate their posterior distributions under a Bayesian framework. The rank budget is then allocated based on the total allowed rank and the inferred significance of each singular value.
However, this method introduces considerable computational overhead and suffers from high sensitivity to sampling strategies and prior assumptions.
A more computationally efficient alternative considers the importance of each singular value through loss sensitivity analysis. 
AdaLoRA~\cite{zhang2023adaptive}, for instance, linearizes the loss function with respect to the singular values of the parameter updates. The first-order approximation provides a sensitivity measure that guides adaptive pruning by identifying which components of the singular values contribute significantly to the loss.

Our work builds upon the direction of AdaLoRA by determining the rank allocation for each LoRA layer based on the importance of the singular values in the parameter updates. 
However, we observe that first-order linearization suffers from poor approximation accuracy, particularly when the LoRA fine-tuning reaches a stationary point. In such cases, the gradient vanishes, making linear sensitivity metrics unreliable for rank determination.
Motivated by this limitation, we propose to adopt a second-order expansion of the objective by incorporating Hessian information, enabling a more accurate approximation that captures the curvature of the loss landscape. 
This leads us to formulate the rank determination problem as a combinatorial optimization problem with set-valued quadratic objective.
Although the second-order formulation offers a better geometric understanding of the objective, it introduces an NP-hard optimization challenge, unlike the linearized formulation, which admits closed-form solutions.
To address this, we draw inspiration from the theoretical guarantees of the greedy algorithm in submodular function maximization. 
We introduce a Hessian projection that transforms the original set-valued quadratic objective into a submodular function. We prove that this projection is well-defined, admits a closed-form solution, and is computationally efficient.
As a result, the projected second-order objective becomes submodular, allowing us to apply greedy algorithms to obtain provably near-optimal solutions.
We apply the proposed technique to LoRA fine-tuning in physics-informed neural networks (PINNs) under rank budget constraints, where the loss landscape is especially complex and curvature information is valuable.
Our main contributions are summarized as follows:
\begin{enumerate}[label=(\roman*)]
    \item We show that linearization-based rank determination becomes unreliable near stationary points. To address this, we incorporate Hessian information and reformulate the problem as a combinatorial optimization problem with a set-valued quadratic objective.

    \item We design a projection of the Hessian matrix that transforms the objective into a submodular function, enabling efficient optimization by greedy algorithms with theoretical guarantees.

    \item We develop an alternating algorithm that jointly updates LoRA parameters and refines the rank allocation iteratively.

    \item We demonstrate the effectiveness of the proposed method on solving a class of PDEs using LoRA-based fine-tuning of PINNs. Experimental results show the superiority of the proposed rank determination method over other counterparts.
\end{enumerate}

The remainder of this paper is organized as follows.
In \Cref{sec_preliminary}, we introduce the notation and provide background on LoRA, rank determination, and submodular functions in combinatorial optimization.
\Cref{sec_method} presents our core methodology, where we reformulate the rank determination for LoRA as a submodular function maximization problem. In \Cref{sec_applications}, we demonstrate the application of the proposed method to LoRA fine-tuning in PINNs. Experimental results are presented and analyzed in \Cref{sec_experiments}. Finally, we conclude the paper with a discussion of key contributions, observations, and future directions in \Cref{sec_conclusion}.

\section{Preliminaries}
\label{sec_preliminary}
In this section, we first introduce the notation used throughout the paper. We then review the fundamental concepts of submodular functions in combinatorial optimization and provide a brief overview of LoRA. Finally, we discuss existing methods of rank determination for LoRA and highlight their potential limitations.

\subsection{Notation}
In this paper, we use bold lowercase letters (e.g., $\bx$) to denote vectors and bold uppercase letters (e.g., $\bA$) to represent matrices.
Scalars are represented using regular (non-bold) lowercase letters (e.g., $a$). 
The operator $\text{diag}(\cdot)$ constructs a square diagonal matrix from a given vector. Specifically, for $\ba \in \mathbb{R}^n$, we define $\bA = \text{diag}(\ba) \in \mathbb{R}^{n \times n}$ such that $A_{i,i} = a_i$ and $A_{i,j} = 0$ for $i \neq j$.
We denote the set $\{1,2,\cdots,n\}$ by $[n]$. Calligraphic letters (e.g., $\mathcal{S}$) are used to denote sets, and $|\mathcal{S}|$ denotes the cardinality (i.e., the number of elements) of set $\mathcal{S}$. For a given vector $\ba \in \mathbb{R}^n$ and a set $\mathcal{S} \subseteq [n]$, we define $[\ba]_{\mathcal{S}} \in \mathbb{R}^{|\mathcal{S}|}$ as the subvector of $\ba$ that contains only the entries indexed by $\mathcal{S}$. Similarly, for a matrix $\bA \in \mathbb{R}^{n \times n}$, we define $[\bA]_{\mathcal{S}} \in \mathbb{R}^{|\mathcal{S}| \times |\mathcal{S}|}$ as the principal submatrix of $\bA$ formed by retaining only the rows and columns indexed by $\mathcal{S}$. For a matrix $\bB \in \mathbb{R}^{m \times n}$, We define $(\bB)_{\mathcal{S}} \in \mathbb{R}^{m \times |\mathcal{S}|}$ as the column submatrix of $\bB$, constructed by selecting the columns indexed by $\mathcal{S}$.

\subsection{Submodular Function}
We consider a set-valued function $f: 2^{\Omega} \to \mathbb{R}$, where $2^{\Omega}$ denotes the power set of the finite ground set $\Omega$. The function $f$ is said to be submodular if it satisfies the following diminishing returns property:

\begin{definition}[See~\cite{krause2014submodular,fujishige2005submodular}]
\label{def_submodular}
    A set-valued function $f: 2^{\Omega} \to \mathbb{R}$ is submodular if for any sets $\mathcal{X}$, $\mathcal{Y} \subseteq \Omega$ with $\mathcal{X} \subseteq \mathcal{Y}$ and every element $x \in \Omega \setminus \mathcal{Y}$, the following holds:
    \begin{equation*}
        f(\mathcal{X} \cup \{x\}) - f(\mathcal{X}) \geq f(\mathcal{Y} \cup \{x\}) - f(\mathcal{Y}).
    \end{equation*}
    Equivalently, submodularity can be characterized by the following inequality:
    \begin{equation*}
        f(\mathcal{X}) + f(\mathcal{Y}) \geq f(\mathcal{X} \cup \mathcal{Y}) + f(\mathcal{X} \cap \mathcal{Y}),
    \end{equation*}
    for any $\mathcal{X}, \mathcal{Y} \subseteq \Omega$.
    
\end{definition}
This property captures the intuitive notion of diminishing returns: adding an element to a smaller set yields a larger marginal gain than adding it to a larger set. 
Submodular functions play a crucial role in combinatorial optimization, as convex and concave functions do in continuous optimization. While maximizing or minimizing a general set-valued function is typically NP-hard, submodular functions admit efficient approximation algorithms with provable theoretical guarantees, which we will discuss in later sections.

\begin{definition}
\label{def_monotone}
    A set-valued function $f$ is monotone if for all $\mathcal{X} \subseteq \mathcal{Y} \subseteq \Omega$, we have $f(\mathcal{X}) \leq f(\mathcal{Y})$. 
\end{definition}
Monotonicity further strengthens the behaviors of submodular functions in combinatorial optimization, as many greedy algorithms achieve better approximation guarantees when the objective is both submodular and monotone. We will discuss this in detail in the following sections.

\subsection{Low Rank Adaptation}
In transfer learning, the model parameters $\bW_{\text{ft}}$ are fine-tuned based on pre-trained weights $\bW_{\text{pt}}$ as follows:
\begin{equation*}
    \bW_{\text{ft}} = \bW_{\text{pt}} + \Delta \bW,
\end{equation*}
where $\bW_{\text{pt}}, \bW_{\text{ft}}, \Delta \bW \in \mathbb{R}^{n_2 \times n_1}$.
Standard fine-tuning updates the entire parameter matrix  $\bW_{\text{pt}}$  by $\Delta\bW$, providing full flexibility but at the cost of significant computational and memory overhead.
LoRA~\cite{hu2022lora} addresses this inefficiency by imposing a low-rank structure on the update matrix $\Delta \bW$, reducing both parameter count and computational cost. 
Specifically, LoRA parameterizes the update $\Delta \bW$ by the Burer–Monteiro factorization:
\begin{equation*}
    \Delta \bW = \bB \bA,
\end{equation*}
where $\bA \in \mathbb{R}^{r \times n_1}$ and $\bB \in \mathbb{R}^{n_2 \times r}$, and $r \ll \min\{n_1, n_2\}$.
Instead of optimizing the full matrix $\Delta \bW$, LoRA focuses on training the smaller matrices $\bA$ and $\bB$, thereby reducing the number of parameters and computational complexity.

In addition to reducing the number of trainable parameters from $n_1 n_2$ to $r(n_1 + n_2)$, this low-rank constraint introduces useful inductive bias. It helps filter out noise, improves generalization, and enhances the robustness of fine-tuning, especially when limited data or computation is available~\cite{malladi2023kernel,zeng2024the,shuttleworth2024lora}.

\subsection{Rank Determination of LoRA}
For LoRA-based fine-tuning of a multi-layer neural network $\phi\left(\cdot;\Theta\right)$, the model parameters are denoted as $\Theta := \{\bW_{1}, \bW_{2}, \cdots, \bW_{L}\}$, where each layer $\ell$ follows the update rule:
\begin{equation*}
    \bW_{\text{ft},\ell} = \bW_{\text{pt},\ell} + \bB_{\ell} \bA_{\ell},
\end{equation*}
where $\bW_{\text{ft},\ell}\in \mathbb{R}^{n_{\ell+1} \times n_{\ell}}$ denotes the fine-tuned parameters, $ \bW_{\text{pt},\ell} \in \mathbb{R}^{n_{\ell+1} \times n_{\ell}}$ the pre-trained parameters, $\bA_{\ell} \in \mathbb{R}^{r_{\ell} \times n_{\ell}}$, and $\bB_{\ell} \in \mathbb{R}^{n_{\ell+1} \times r_{\ell}}$ the LoRA components.
Here, $n_{\ell}$ represents the width of the $\ell$-th layer, and $r_{\ell}$ is the assigned LoRA rank. Given a global rank budget $b$ (i.e., $\sum_{\ell=1}^{L} r_{\ell} \leq b$)), a key challenge is deciding how to allocate the individual ranks $r_{\ell}$ across layers. Most existing LoRA implementations consider these ranks as {\em fixed} hyperparameters, which can result in suboptimal performance if the chosen ranks do not align well with layer and component importance. In contrast, automatic rank determination seeks to address this issue by {\em dynamically} assigning ranks based on the significance of each layer and component, potentially leading to more efficient and effective model compression.

% Given a global rank budget $b$ (i.e., $\sum_{\ell=1}^{L} r_{\ell} \leq b$),
% % $R = \sum_{\ell=1}^{L} r_{\ell}$,  
% determining how to allocate rank $r_{\ell}$ across layers is a key challenge. 
% In most implementations of LoRA, it  treats these ranks as fixed hyperparameters, which may lead to suboptimal performance if the ranks not chosen appropriately. In contrast, automatic rank determination aims to address this issue by dynamically assigning ranks based on layer and component importance. 

An alternative to the Burer–Monteiro factorization for rank determination in LoRA is to consider the singular value decomposition (SVD) of the parameter updates. Specifically, the update $\Delta \bW$ is factorized by
\begin{equation}
\label{eq_lora_svd}
    \bW_{\text{ft},\ell} = \bW_{\text{pt},\ell} + \bU_{\ell} \bSig_{\ell} \bV_{\ell},
\end{equation}
where $\bU_{\ell} \in \mathbb{R}^{n_{\ell+1} \times r_{\ell}}$, $\bV_{\ell} \in \mathbb{R}^{r_{\ell} \times n_{\ell}}$, and $\bSig_{\ell} := \text{diag}(\bsig_{\ell}) \in \mathbb{R}^{r_{\ell} \times r_{\ell}}$ is a diagonal matrix with $\bsig_{\ell} \in \mathbb{R}^{r_{\ell}}$.
In this formulation, rank determination reduces to identifying which entries of $\bsig_{\ell}$ are effectively zero, as they correspond to components that do not contribute to the rank. 
Yang et al.~\cite{yang2024bayesian} model the singular values $\bsig_{\ell}$ as random variables following a Gaussian distribution, and estimate their distribution using a Bayesian inference framework. Based on the estimated posterior, rank pruning is performed by hypothesis testing, determining which entries of $\bsig_{\ell}$ can be considered insignificant (i.e., close to zero) and thus pruned.

Another line of research regards $\bsig_{\ell}$ as a deterministic parameter and performs pruning based on the dynamics of the loss function after fine-tuning. 
For notational simplicity, we denote the pre-trained model parameters as $\Theta_{\text{pt}}:= \left\{ \bW_{\text{pt},1}, \cdots, \bW_{\text{pt},L}\right\}$, and the fine-tuned model parameters as $\Theta_{\text{ft}}:= \left\{ \bW_{\text{ft},1}, \cdots, \bW_{\text{ft},L}\right\}$ obtained by \Cref{eq_lora_svd}. The corresponding LoRA components are denoted by $\Theta_{\text{LoRA}} := \left\{ \bU_{\ell}, \bV_{\ell}, \bsig_{\ell}\right\}_{\ell \in [L]}$.
For convenience, we define the operator $\oplus$ such that the fine-tuned parameters can be written as $\Theta_{\text{ft}} := \Theta_{\text{pt}} \oplus \Theta_{\text{LoRA}}$ in accordance with the decomposition in \Cref{eq_lora_svd}.
LoRA fine-tuning aims to minimize the following objective function:
\begin{equation}
\label{eq_lora_finetuning}
    \mathcal{L}_{\text{ft}}\left(\Theta_{\text{LoRA}}\right) = \frac{1}{N} \sum_{i=1}^{N} \mathrm{Loss}\left(\phi\left(\bx_{i};\Theta_{\text{pt}} \oplus \Theta_{\text{LoRA}}\right) ; \by_{i} \right),
\end{equation}
where $\mathrm{Loss}(\cdot;\cdot)$ represents the loss function used to compare predictions and ground-truth labels, and $\{\bx_{i}, \by_{i}\}_{i=1}^{N}$ denotes the training dataset.
The trainable parameters in this setting are those within $\Theta_{\text{LoRA}}$.
In practice, the LoRA rank $r_{\ell}$ is typically assigned uniformly across all layers and chosen to be larger than necessary when no prior knowledge is available.
Under this setup, the rank determination problem seeks to prune the diagonal entries of $\bSig_{\ell}$ (equivalently, the entries of $\bsig_{\ell}$), such that the total number of nonzero singular values across all layers remains within a given budget $b$, while maintaining competitive model performance.
The key challenge lies in determining both (i) how to allocate the rank budget across layers and (ii) which components of $\bsig_{\ell}$ should be retained. To formalize this, we define:
\begin{equation}
    \mathcal{L}\left(\{\Sl\}_{\ell \in [L]}\right) := \mathcal{L}_{\text{ft}}\left(\left[\Theta_{\text{LoRA}}\right]_{\{\Sl\}_{\ell \in [L]}}\right),
\end{equation}
where $\mathcal{S}_{\ell} \subseteq [r_{\ell}]$ is the set of indices of the retained entries at the $\ell$-th layer, and $\left[\Theta_{\text{LoRA}}\right]_{\{\mathcal{S}_{\ell}\}_{\ell \in [L]}} := \left\{ (\bU_{\ell})_{\mathcal{S}_{\ell}}, (\bV_{\ell})_{\mathcal{S}_{\ell}}, [\bsig_{\ell}]_{\mathcal{S}_{\ell}} \right\}_{\ell \in [L]}$. The goal is to solve the following combinatorial optimization problem with cardinality constraints:
\begin{equation}
    \label{eq_prob_nonlinear}
    \begin{split}
        \min_{\{\mathcal{S}_{\ell}\}_{\ell \in [L]}} & \mathcal{L}\left(\{\Sl\}_{\ell \in [L]}\right),\\
        \text{s.t.} & \quad \sum_{\ell=1}^{L}|\Sl| \leq b,
    \end{split}
\end{equation}
where $b$ denotes the total rank budget for the whole model.
Discarding an entry of $\bsig_{\ell}$ is equivalent to setting its corresponding singular value to zero. Therefore, the optimization seeks to prune singular values that contribute the least to performance, such that the loss increase is minimized. Ideally, the pruned model performs comparably to or in some cases better than the original model, while significantly reducing the parameter count.
However, \Cref{eq_prob_nonlinear} is a combinatorial optimization problem over $\sum_{\ell=1}^{L} r_{\ell}$ binary decision variables. Solving such a problem exactly is NP-hard and computationally intractable in general.

For analytical insight, we decompose the pruning procedure as:
\begin{equation}
\label{eq_prunening}
    \mathcal{L}\left(\{\Sl\}_{\ell \in [L]}\right) - \mathcal{L}_{\text{ft}}\left(\Theta_{\text{LoRA}}\right) + \mathcal{L}_{\text{ft}}\left(\Theta_{\text{LoRA}}\right).
\end{equation}
where the first two terms reflect the change in loss due to pruning, and the third term is the loss of the unpruned fine-tuned model.
Note that $\mathcal{L}_{\text{ft}}\left(\Theta_{\text{LoRA}}\right)$ is a fixed quantity after the fine-tuning stage.
Previous work has adopted a first-order (linear) relaxation of the loss difference $\mathcal{L}\left(\{\Sl\}_{\ell \in [L]}\right) - \mathcal{L}_{\text{ft}}\left(\Theta_{\text{LoRA}}\right)$, resulting in the following approximation:
\begin{equation*}
    \begin{split}
        & \mathcal{L}\left(\{\Sl\}_{\ell \in [L]}\right) - \mathcal{L}_{\text{ft}}\left(\Theta_{\text{LoRA}}\right)\\
        & = \mathcal{L}_{\text{ft}}\left(\left[\Theta_{\text{LoRA}}\right]_{\{\Sl\}_{\ell \in [L]}}\right) - \mathcal{L}_{\text{ft}}\left(\Theta_{\text{LoRA}}\right)\\
        & \approx \sum_{\ell=1}^{L} \left\langle \nabla_{\bsig_{\ell}}\mathcal{L}_{\text{ft}}\left(\Theta_{\text{LoRA}}\right), \left[\bsig_{\ell}\right]_{\Sl} - \bsig_{\ell}\right\rangle \\
        & = \sum_{\ell=1}^{L}\left\langle \left[\nabla_{\bsig_{\ell}}\mathcal{L}_{\text{ft}}\left(\Theta_{\text{LoRA}}\right)\right]_{\Sl^{-}}, -\left[\bsig_{\ell}\right]_{\Sl^{-}} \right\rangle,
    \end{split}
\end{equation*}
where $\Sl^{-} := [r_{\ell}] \setminus \Sl$ denotes the complement of the selected indices. This leads to the following surrogate cardinality-constrained combinatorial optimization problem:
\begin{equation}
\label{eq_linear_relaxation}
    \begin{split}
        \min_{\{\mathcal{S}_{\ell}\}_{\ell \in [L]}} & \sum_{\ell=1}^{L} \left\langle \left[\nabla_{\bsig_{\ell}}\mathcal{L}_{\text{ft}}\left(\Theta_{\text{LoRA}}\right)\right]_{\Sl^{-}}, -\left[\bsig_{\ell}\right]_{\Sl^{-}} \right\rangle,\\
    \text{s.t.} & \quad \sum_{\ell=1}^{L}|\Sl| \leq b.
    \end{split}
\end{equation}
Zhang et al.~\cite{zhang2023adaptive} proposed AdaLoRA, which prunes singular values based on their estimated sensitivities. Their method builds upon the linear relaxation in \Cref{eq_linear_relaxation}, with the additional use of elementwise absolute values in the objective to prioritize components with large magnitude impact.

\section{The Proposed Method}
\label{sec_method}
In this section, we first highlight the limitations of the previous rank determination method based on linear relaxation. Building on this observation and motivated by the smoothness properties of the loss function, we propose using a quadratic approximation of the objective, which provides a more accurate capture of the loss landscape. Based on this formulation, we reformulate the rank determination problem as a submodular function maximization problem and adopt the greedy algorithm to solve it efficiently, with theoretical approximation guarantees.
We further extend this method by developing an alternating algorithm that jointly performs rank determination and parameter updates. This integrated strategy aims to further improve the effectiveness of LoRA fine-tuning.

\subsection{Motivation}
The first-order expansion of the loss function used in \Cref{eq_linear_relaxation} may not be a reliable or effective approximation for rank determination. Here, we elaborate on its limitations in detail.
The rank determination typically involves two stages. In the first stage, we fine-tune the model using LoRA with overestimated ranks based on limited prior knowledge. In the second stage, we prune some less important singular values to fit within a total rank budget. Our focus is specifically on the second stage that selects singular values whose removal has less significant impact on performance.
Suppose that the LoRA fine-tuning has been sufficiently optimized and the resulting parameters have reached a stationary point $\Theta_{\text{LoRA}}$ of the objective in \Cref{eq_lora_finetuning}. In this case, we have
\begin{equation*}
    \nabla_{\bsig_{\ell}}\mathcal{L}_{\text{ft}}\left(\Theta_{\text{LoRA}}\right) = \bm{0},
\end{equation*}
which implies that the first-order expansion in \Cref{eq_linear_relaxation} evaluates to zero. Consequently, the linearized objective provides no meaningful signal for pruning, making the optimization problem ill-posed. Notably, this failure is not due to the inadequacy of the expansion, but rather to its strong dependence on gradient information that vanishes at the stationary point.
In practice, LoRA fine-tuning is often carried out to convergence, resulting in nearly stationary solutions. This makes rank pruning based on first-order expansion unreliable and unstable.

To further illustrate this issue, let us consider a simple toy example:
$\mathcal{L}_{\text{ft}}(\sigma_1, \sigma_2) = (\sigma_{1} - \sigma_{2} + 1)^2$. Also consider the  point $(\sigma_1, \sigma_2) = (1, 2.1)$ which is close to the stationary point $(1,2)$. 
In this case, it would be more appropriate to prune $\sigma_{1}$, as $\mathcal{L}_{\text{ft}}(0,2.1) = 1.1^2$ and $\mathcal{L}_{\text{ft}}(1,0) = 4$. However, the gradient at $(\sigma_1, \sigma_2) = (1, 2.1)$  is $\nabla_{\bsig}\mathcal{L}(\sigma_1,\sigma_2) = (-0.2, 0.2)$, which would mislead the linear relaxation in \Cref{eq_linear_relaxation} into pruning $\sigma_2$.
Now consider a slight perturbation of the original point $(\sigma_1, \sigma_2) = (1, 2.1)$  to $(\sigma_1, \sigma_2) = (1.1, 2)$, where the gradient becomes positive for $\sigma_1$, and the linear relaxation would correctly prune $\sigma_1$. This demonstrates that small variations near stationary points can drastically alter the decision of which singular value to prune, despite the fact that the underlying model behavior is essentially unchanged.
This sensitivity arises because the linearization overemphasizes singular values that deviate more from optimality, rather than those that contribute most meaningfully to the loss. 
As a result, the behavior of \Cref{eq_linear_relaxation} is tightly coupled with the optimality of the fine-tuned parameters $\Theta_{\text{LoRA}}$, making it an overly crude, and hence unreliable, basis for rank pruning.
This example underscores the broader insight that first-order information is inadequate for approximating the nonlinear loss surface near stationary points. To address this limitation, we propose using a second-order approximation that captures curvature and provides a more stable and accurate modeling for singular value pruning, especially in the nearly converged regime typical of LoRA fine-tuning.

\subsection{Hessian-Guided Rank Determination}
To address the limitations of the first-order approximation discussed earlier, we propose adopting a second-order (quadratic) expansion of the loss function. This approach provides a more accurate approximation of the objective and avoids the degeneracy that arises when the fine-tuning step reaches near-stationarity.
Let $\bsig \in \mathbb{R}^{\sum_{\ell=1}^{L} r_{\ell}}$ denote the concatenation of all singular value vectors $\bsig_{\ell}$ across layers $\ell \in [L]$. We define $\mathcal{S} \subseteq \left[\sum_{\ell=1}^{L} r_{\ell}\right]$ as the set of indices corresponding to the singular values $\bsig$ of the whole model we choose to retain.
Recall that the rank-pruned objective difference can be approximated using a second-order Taylor expansion:
\begin{equation*}
    \begin{split}
        & \mathcal{L}\left(\{\Sl\}_{\ell \in [L]}\right) - \mathcal{L}_{\text{ft}}\left(\Theta_{\text{LoRA}}\right)\\
        & = \mathcal{L}_{\text{ft}}\left(\left[\Theta_{\text{LoRA}}\right]_{\{\Sl\}_{\ell \in [L]}}\right) - \mathcal{L}_{\text{ft}}\left(\Theta_{\text{LoRA}}\right)\\
        & \approx \left\langle \nabla_{\bsig}\mathcal{L}_{\text{ft}}\left(\Theta_{\text{LoRA}}\right), \left[\bsig\right]_{\mathcal{S}} - \bsig\right\rangle + \frac{1}{2} \left(\left[\bsig\right]_{\mathcal{S}} - \bsig\right)^{\top} \nabla_{\bsig}^2 \mathcal{L}_{\text{ft}}\left(\Theta_{\text{LoRA}}\right) \left(\left[\bsig\right]_{\mathcal{S}} - \bsig\right)  \\
        & = \left\langle \left[\nabla_{\bsig}\mathcal{L}_{\text{ft}}\left(\Theta_{\text{LoRA}}\right)\right]_{\mathcal{S}^{-}}, -\left[\bsig\right]_{\mathcal{S}^{-}} \right\rangle + \frac{1}{2} \left[\bsig\right]_{\mathcal{S}^{-}}^{\top} \left[\nabla_{\bsig}^2 \mathcal{L}_{\text{ft}}\left(\Theta_{\text{LoRA}}\right)\right]_{\mathcal{S}^{-}} \left[\bsig\right]_{\mathcal{S}^{-}},
    \end{split}
\end{equation*}
where $\nabla_{\bsig}^2 \mathcal{L}_{\text{ft}}\left(\Theta_{\text{LoRA}}\right)$ is the Hessian matrix of the fine-tuning objective function with respect to singular values $\bsig$ and $\mathcal{S}^{-}:= \left[\sum_{\ell=1}^{L} r_{\ell}\right] \setminus \mathcal{S}$ represents the complement of $\mathcal{S}$.
Then, we reformulate the approximation of \Cref{eq_prob_nonlinear} into the following combinatorial problem with quadratic objective:
\begin{equation}
\label{eq_quadratic}
   \begin{split}
        \min_{\mathcal{S} \subseteq \left[\sum_{\ell=1}^{L} r_{\ell}\right]} & \left\langle \left[\nabla_{\bsig}\mathcal{L}_{\text{ft}}\left(\Theta_{\text{LoRA}}\right)\right]_{\mathcal{S}^{-}}, -\left[\bsig\right]_{\mathcal{S}^{-}} \right\rangle + \frac{1}{2} \left[\bsig\right]_{\mathcal{S}^{-}}^{\top} \left[\nabla_{\bsig}^2 \mathcal{L}_{\text{ft}}\left(\Theta_{\text{LoRA}}\right)\right]_{\mathcal{S}^{-}} \left[\bsig\right]_{\mathcal{S}^{-}}, \\
        \text{s.t.} & \quad |\mathcal{S}| \leq b.
   \end{split}
\end{equation}
This formulation incorporates both first-order sensitivity and second-order curvature, enabling more robust rank determination, especially in regions near stationary points, where gradients vanish and linear approximations become unreliable. 
Moreover, the inclusion of curvature information allows the model to better capture the structure of the underlying loss landscape, leading to more accurate rank determination.
Beyond interpreting \Cref{eq_quadratic} as a second-order Taylor expansion of the nonlinear objective function, it can also be understood from a geometric perspective. When the fine-tuned parameters are at or near a local minimum, the gradient vanishes $\nabla_{\bsig}\mathcal{L}_{\text{ft}}=\bm{0}$, and the Hessian matrix $\nabla^2_{\bsig}\mathcal{L}_{\text{ft}}$ is positive semi-definite. Under this condition, \Cref{eq_quadratic} simplifies to the problem of minimizing the quadratic form $\|\bsig\|_{\nabla^2_{\bsig}\mathcal{L}_{\text{ft}}}^2 := \bsig^{\top} \nabla^2_{\bsig}\mathcal{L}_{\text{ft}} \bsig$, which represents the magnitude of $\bsig$ measured in the norm induced by the Hessian. This norm quantifies the curvature-aware sensitivity of each entry in $\bsig$, therefore, providing a reliable criterion of singular value pruning for rank determination based on second-order information.

Recall that the motivation behind considering \Cref{eq_linear_relaxation} is to relax the intractable optimization problem in \Cref{eq_prob_nonlinear} into a solvable form. However, the quadratic formulation in \Cref{eq_quadratic}, while more intuitively convincing and theoretically sound, remains NP-hard due to its combinatorial and non-separable structure. This motivates us to further simplify \Cref{eq_quadratic} into a tractable approximation.
A natural relaxation is to consider only the {\em diagonal} elements of the Hessian matrix, rather than the full second-order structure. This diagonal relaxation ensures that the objective becomes separable across the variables, greatly reducing computational complexity. In this case, the optimization problem reduces to selecting the $b$ elements of $\bsig$ that contribute the most to the quadratic objective. Specifically, we approximate \Cref{eq_quadratic} as follows:
\begin{equation}
\label{eq_quadratic_diagonal}
    \begin{split}
        \min_{\mathcal{S} \subseteq \left[\sum_{\ell=1}^{L} r_{\ell}\right]} & \left\langle \left[\nabla_{\bsig}\mathcal{L}_{\text{ft}}\left(\Theta_{\text{LoRA}}\right)\right]_{\mathcal{S}^{-}}, -\left[\bsig\right]_{\mathcal{S}^{-}} \right\rangle + \frac{1}{2} \left[\bsig\right]_{\mathcal{S}^{-}}^{\top} \left[\bD\right]_{\mathcal{S}^{-}} \left[\bsig\right]_{\mathcal{S}^{-}}, \\
        \text{s.t.} & \quad |\mathcal{S}| \leq b,
   \end{split}
\end{equation}
where $\bD := \text{diag}\left(\nabla_{\bsig}^2 \mathcal{L}_{\text{ft}}\left(\Theta_{\text{LoRA}}\right)\right)$ denotes the diagonal of the Hessian.
In this setting, the optimal solution corresponds to selecting the $b$ indices with the largest loss sensitivity and curvature penalty of the corresponding singular value, i.e., % contributions to the following quantity:
\begin{equation*}
    - \left[\nabla_{\bsig}\mathcal{L}_{\text{ft}}\left(\Theta_{\text{LoRA}}\right)\right]_{i} \cdot \sigma_{i} + \frac{1}{2} D_{i,i} \cdot \sigma_{i}^2.
\end{equation*}
%where each term reflects the loss sensitivity and curvature penalty of the corresponding singular value.

While the diagonal relaxation makes the problem in \Cref{eq_quadratic} tractable, it comes at the cost of discarding much of the Hessian information. This simplification may lead to suboptimal decisions and performance degradation for rank determination. It raises a natural question: \textit{can we strike a better balance between retaining second-order information and ensuring computational tractability?}
At one extreme, using the full Hessian results in a more accurate modeling of the loss landscape but leads to an NP-hard combinatorial optimization problem. At the other extreme, relaxing the Hessian to its diagonal form makes the objective fully separable and easy to solve, but ignores crucial interactions between variables.
We address this challenge by proposing an intermediate relaxation, one that preserves more of the Hessian structure than the diagonal approximation, while still allowing for efficient optimization. 
To achieve this, we reformulate the objective as a submodular function maximization problem, enabling the use of greedy algorithms with provable theoretical approximation guarantees.

\subsection{Submodular Function Maximization}
Motivated by the well-developed theoretical properties of submodular functions, we propose to adopt the greedy algorithm as an efficient solver for approximately solving \Cref{eq_quadratic}. 
Submodular function maximization under cardinality constraints admits strong approximation guarantees, making greedy algorithms particularly appealing in our setting.
The following theorem summarizes the classical theoretical guarantees for greedy algorithms applied to cardinality-constrained submodular maximization. These results justify the use of greedy and randomized greedy algorithms as practical solvers for the proposed rank determination objective.
\begin{theorem}[See~\cite{krause2014submodular,fujishige2005submodular}]
\label{theorem_greedy_algorithm}
    Let  $f: 2^{[r]} \to \mathbb{R}$  be a non-negative objective function. % without  loss of generality.
 Consider the following cardinality-constrained combinatorial optimization problem:
    \begin{equation*}
        \begin{split}
            \max_{\mathcal{S} \subseteq [r]} &\quad f(\mathcal{S}),\\
            \text{s.t.} & \quad |\mathcal{S}| \leq b,
        \end{split}
    \end{equation*}
    If $f$ is submodular and monotone, then the solution $\mathcal{S}^{\#}$ obtained from the greedy algorithm (\Cref{alg_greedy}) satisfies:
    \begin{equation*}
        f(\mathcal{S}^{\#}) \geq (1-1/e) \cdot f(\mathcal{S}^{*}),
    \end{equation*}
    where $\mathcal{S}^{*}$ denotes an optimal solution.
    Moreover, if the monotonicity condition does not hold, then the randomized greedy algorithm (\Cref{alg_randomized_greedy}) returns a solution $\mathcal{S}^{\#}$ satisfying:
    \begin{equation*}
        \mathbb{E}\left[f(\mathcal{S}^{\#})\right] \geq  (1/e) \cdot  f(\mathcal{S}^{*}).
    \end{equation*}
\end{theorem}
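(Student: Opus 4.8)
The plan is to treat the monotone and non-monotone cases separately, in both cases via the standard "gap recursion" argument. For the monotone case, let $\mathcal{S}^{*} = \{o_{1}, \ldots, o_{b}\}$ be an optimal solution and let $\mathcal{S}_{0} = \emptyset \subseteq \mathcal{S}_{1} \subseteq \cdots \subseteq \mathcal{S}_{b} = \mathcal{S}^{\#}$ be the chain of sets produced by the greedy algorithm. The key estimate is that the greedy marginal gain at step $i+1$ dominates an averaged version of the current gap: by monotonicity $f(\mathcal{S}^{*}) \le f(\mathcal{S}^{*} \cup \mathcal{S}_{i})$, and telescoping $f(\mathcal{S}^{*} \cup \mathcal{S}_{i}) - f(\mathcal{S}_{i})$ over the at most $b$ elements of $\mathcal{S}^{*} \setminus \mathcal{S}_{i}$ while bounding each term by its marginal contribution to $\mathcal{S}_{i}$ via submodularity, one obtains $f(\mathcal{S}^{*}) - f(\mathcal{S}_{i}) \le b\,\big( f(\mathcal{S}_{i+1}) - f(\mathcal{S}_{i}) \big)$, since greedy picks the element of largest marginal gain. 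Writing $\delta_{i} := f(\mathcal{S}^{*}) - f(\mathcal{S}_{i})$ this reads $\delta_{i+1} \le (1 - 1/b)\,\delta_{i}$, hence $\delta_{b} \le (1 - 1/b)^{b}\,\delta_{0} \le e^{-1}\delta_{0} \le e^{-1} f(\mathcal{S}^{*})$, using non-negativity $f(\mathcal{S}_{0}) = f(\emptyset) \ge 0$; rearranging yields $f(\mathcal{S}^{\#}) \ge (1 - 1/e)\,f(\mathcal{S}^{*})$.

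For the non-monotone case I would run the same recursion in expectation, with two changes. Let $\mathcal{S}_{i}$ denote the random set after $i$ steps of the randomized greedy algorithm, which at each step selects uniformly at random one of the $b$ elements with largest current marginal gain. First, averaging over these $b$ candidates together with submodularity gives $\mathbb{E}\big[ f(\mathcal{S}_{i+1}) - f(\mathcal{S}_{i}) \mid \mathcal{S}_{i} \big] \ge \tfrac{1}{b}\big( f(\mathcal{S}_{i} \cup \mathcal{S}^{*}) - f(\mathcal{S}_{i}) \big)$. Second, replacing the use of monotonicity, I would show that since each fixed element belongs to $\mathcal{S}_{i}$ with probability at most $1 - (1 - 1/b)^{i}$, a lemma on submodular functions evaluated at random sets with bounded inclusion probabilities gives $\mathbb{E}[ f(\mathcal{S}_{i} \cup \mathcal{S}^{*}) ] \ge (1 - 1/b)^{i}\,f(\mathcal{S}^{*})$. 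Combining and taking expectations produces the recursion $\mathbb{E}[ f(\mathcal{S}_{i+1}) ] \ge (1 - 1/b)\,\mathbb{E}[ f(\mathcal{S}_{i}) ] + \tfrac{1}{b}(1 - 1/b)^{i}\,f(\mathcal{S}^{*})$, and an induction then shows $\mathbb{E}[ f(\mathcal{S}_{i}) ] \ge \tfrac{i}{b}(1 - 1/b)^{i-1}\,f(\mathcal{S}^{*})$; at $i = b$ this gives $\mathbb{E}[ f(\mathcal{S}^{\#}) ] \ge (1 - 1/b)^{b-1}\,f(\mathcal{S}^{*}) \ge e^{-1}\,f(\mathcal{S}^{*})$.

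The main obstacle is the second ingredient of the non-monotone case: the inequality $\mathbb{E}[ f(\mathcal{R} \cup \mathcal{S}^{*}) ] \ge (1 - p)\,f(\mathcal{S}^{*})$ whenever $\mathcal{R}$ is a random set that contains each ground-set element with probability at most $p$. This cannot be derived by applying submodularity deterministically; it genuinely uses the randomness, for instance by reducing to the case where $\mathcal{R}$ keeps each of its elements independently and then conditioning on one element at a time, invoking submodularity together with $f \ge 0$ at each step. The remaining work is bookkeeping: verifying that the top-$b$ candidate set at each greedy step is well-defined, propagating the inclusion-probability bound $1 - (1 - 1/b)^{i}$ through the randomized recursion, and discharging the induction. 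Both halves are classical — the monotone bound is due to Nemhauser, Wolsey and Fisher, and the randomized bound to Buchbinder, Feldman, Naor and Schwartz — so I would present the two recursions in full and only sketch the random-set lemma, citing the standard references.
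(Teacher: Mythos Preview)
The paper does not prove this theorem; it is quoted from the literature (the theorem header itself carries the citation ``See~\cite{krause2014submodular,fujishige2005submodular}''), so there is no in-paper argument to compare against. Your monotone argument is the standard Nemhauser--Wolsey--Fisher recursion and is fine.

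There is, however, a genuine mismatch in your non-monotone half. You analyze the Buchbinder--Feldman--Naor--Schwartz procedure that at each step samples \emph{uniformly} from the $b$ elements of largest marginal gain, and your key inclusion-probability bound $\Pr[e \in \mathcal{S}_{i}] \le 1 - (1-1/b)^{i}$ (and hence the random-set lemma you invoke) rests on that uniform sampling. But the theorem is stated for the paper's \Cref{alg_randomized_greedy}, which samples from \emph{all} remaining elements with probability proportional to $\max\{f(\mathcal{S}\cup\{j\}) - f(\mathcal{S}),\,0\}$. For this proportional scheme the per-element inclusion probability is not uniformly controlled by $1/b$ per step, so your recursion does not go through as written. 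Either reprove the bound for the proportional-sampling variant (which requires a different control on inclusion probabilities), or note explicitly that your argument covers the uniform-top-$b$ variant and that the paper's \Cref{alg_randomized_greedy} is a different randomization for which the cited references should be consulted directly.
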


The above results imply that the greedy algorithm, which is computationally practical, achieves a $(1-1/e)$-approximation ratio to the optimal solution when the objective function is submodular and monotone. In the absence of monotonicity, the randomized greedy algorithm can be adopted to avoid getting trapped in poor local optima, leading to an expected approximation ratio of $1/e$. It is important to note that, despite these guarantees, finding an {\em exact} maximizer of a submodular function is NP hard. However, the submodularity property imposes useful structure and regularity on the objective function, ensuring that the solutions obtained from greedy algorithms are provably close to optimal in terms of objective value. 
In contrast, for general (non-submodular) functions, greedy methods can perform arbitrarily poorly unless additional assumptions are imposed. Therefore, in the context of submodular function maximization, greedy algorithms provide a theoretically grounded and efficient approach.

\begin{algorithm}[t!]
\caption{Greedy Algorithm}
\label{alg_greedy}
\begin{algorithmic}[1]
\State Initialize \( \mathcal{S} \gets \emptyset \)
\For{\( i = 1 \) to \( b \)}
    \State Select \( e \in [r] \setminus \mathcal{S} \) that maximizes the marginal gain:
    \[
    e = \arg\max_{j \in [r] \setminus \mathcal{S}} f(\mathcal{S} \cup \{j\}) - f(\mathcal{S})
    \]
    \State Update the solution: \( \mathcal{S} \gets \mathcal{S} \cup \{e\} \)
\EndFor
\State \Return \( \mathcal{S} \)
\end{algorithmic}
\end{algorithm}

\begin{algorithm}[t!]
\caption{Randomized Greedy Algorithm}
\label{alg_randomized_greedy}
\begin{algorithmic}[1]
\State Initialize \( \mathcal{S} \gets \emptyset \)
\For{\( i = 1 \) to \( b \)}
    \State Select \( e \in [r] \setminus \mathcal{S} \) with probability proportional to:
    \[
    \max\{ f(\mathcal{S} \cup \{j\}) - f(\mathcal{S}), 0 \}, \quad \text{for } j \in [r] \setminus \mathcal{S}
    \]
    \State Update the solution: \( \mathcal{S} \gets \mathcal{S} \cup \{e\} \)
\EndFor
\State \Return \( \mathcal{S} \)
\end{algorithmic}
\end{algorithm}

To make use of the theoretical guarantees provided in \Cref{theorem_greedy_algorithm}, we must ensure that the objective function in \Cref{eq_quadratic} is submodular. 
Here, we aim to reformulate \Cref{eq_quadratic} and modify the Hessian matrix such that the resulting objective function becomes submodular.
The following theorem provides a key condition under which a set-valued quadratic function is submodular, thus enabling the use of greedy algorithms with approximation guarantees.

\begin{theorem}
\label{theorem_submodular_Hessian}
Let $\bc \in \mathbb{R}^{r}$, $\bx \in \mathbb{R}^{r}$, and $\bH \in \mathbb{S}^{r}$ be given vectors and a symmetric matrix, respectively. Consider the set-valued function:
    \begin{equation*}
        f(\mathcal{S}) = \left[\bc\right]_{\mathcal{S}^{-}}^{\top}\left[\bx\right]_{\mathcal{S}^{-}} + \frac{1}{2} \left[\bx\right]_{\mathcal{S}^{-}}^{\top} \left[\bH\right]_{\mathcal{S}^{-}} \left[\bx\right]_{\mathcal{S}^{-}},
    \end{equation*}
    where $\mathcal{S} \subseteq [r]$ and $\mathcal{S}^{-} = [r] \setminus \mathcal{S}$. Then $f$
    is submodular if and only if
    \begin{equation*}
        H_{i,j} x_i x_j \leq 0,
    \end{equation*}
    for any $i, j \in [r]$ and $i \neq j$.
\end{theorem}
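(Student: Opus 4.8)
The plan is to verify submodularity directly through the second characterization in \Cref{def_submodular}, namely the inequality $f(\mathcal{X}) + f(\mathcal{Y}) \geq f(\mathcal{X} \cup \mathcal{Y}) + f(\mathcal{X} \cap \mathcal{Y})$, but it will be cleaner to work with the complements. Writing $g(\mathcal{T}) := [\bc]_{\mathcal{T}}^{\top}[\bx]_{\mathcal{T}} + \tfrac{1}{2}[\bx]_{\mathcal{T}}^{\top}[\bH]_{\mathcal{T}}[\bx]_{\mathcal{T}}$ for $\mathcal{T} \subseteq [r]$, we have $f(\mathcal{S}) = g(\mathcal{S}^{-})$, and as $\mathcal{S}$ ranges over subsets of $[r]$ so does $\mathcal{S}^{-}$. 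Since complementation is an order-reversing bijection that swaps unions and intersections, $f$ is submodular if and only if $g$ is \emph{supermodular}, i.e. $g(\mathcal{A}) + g(\mathcal{B}) \leq g(\mathcal{A} \cup \mathcal{B}) + g(\mathcal{A} \cap \mathcal{B})$ for all $\mathcal{A}, \mathcal{B} \subseteq [r]$. So the first step is to reduce the claim to showing: $g$ is supermodular $\iff$ $H_{i,j} x_i x_j \geq 0$ for all $i \neq j$. (The sign in the condition flips under complementation, matching the statement.)

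The second step is to expand $g$ explicitly. Define $y_k := c_k x_k$ and, for an ordered pair, $w_{k,m} := H_{k,m} x_k x_m$; then for $\mathcal{T} \subseteq [r]$,
\begin{equation*}
g(\mathcal{T}) = \sum_{k \in \mathcal{T}} y_k + \frac{1}{2}\sum_{k \in \mathcal{T}} w_{k,k} + \frac{1}{2}\sum_{\substack{k,m \in \mathcal{T} \\ k \neq m}} w_{k,m} = \sum_{k \in \mathcal{T}} \Big(y_k + \tfrac{1}{2}w_{k,k}\Big) + \sum_{\{k,m\} \subseteq \mathcal{T}, \, k\neq m} w_{k,m},
\end{equation*}
using symmetry $w_{k,m} = w_{m,k}$ to collapse the off-diagonal double sum into a sum over unordered pairs. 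Thus $g$ splits into a modular (additive) part $\sum_{k \in \mathcal{T}}(y_k + \tfrac12 w_{k,k})$, which is trivially both sub- and supermodular and hence contributes nothing to the supermodularity defect, plus the quadratic part $q(\mathcal{T}) := \sum_{\{k,m\} \subseteq \mathcal{T}} w_{k,m}$ over two-element subsets. So it suffices to characterize when $q$ is supermodular.

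The third step is the computation of the supermodularity defect of $q$. For $\mathcal{A}, \mathcal{B} \subseteq [r]$ one checks that a pair $\{k,m\}$ contributes to $q(\mathcal{A} \cup \mathcal{B}) + q(\mathcal{A} \cap \mathcal{B}) - q(\mathcal{A}) - q(\mathcal{B})$ with a nonzero (namely $+w_{k,m}$) coefficient exactly when one of $k,m$ lies in $\mathcal{A} \setminus \mathcal{B}$ and the other lies in $\mathcal{B} \setminus \mathcal{A}$, and with coefficient $0$ in all other cases — this is the standard "cross term" bookkeeping (count membership of $\{k,m\}$ in each of the four sets). Hence
\begin{equation*}
q(\mathcal{A} \cup \mathcal{B}) + q(\mathcal{A} \cap \mathcal{B}) - q(\mathcal{A}) - q(\mathcal{B}) = \sum_{k \in \mathcal{A}\setminus\mathcal{B}} \sum_{m \in \mathcal{B}\setminus\mathcal{A}} w_{k,m}.
\end{equation*}
For the "if" direction, if $w_{k,m} = H_{k,m} x_k x_m \geq 0$ for all $k \neq m$, the right-hand side is a sum of nonnegative terms, so $q$ (hence $g$, hence via complementation $f$) is supermodular (submodular). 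For the "only if" direction, given any $i \neq j$ take $\mathcal{A} = \{i\}$, $\mathcal{B} = \{j\}$: the defect equals $w_{i,j}$, so supermodularity of $q$ forces $w_{i,j} = H_{i,j} x_i x_j \geq 0$; translating back through the complement correspondence gives the claimed $H_{i,j} x_i x_j \leq 0$ for $f$.

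The main obstacle is purely the bookkeeping in step three — correctly tracking which of the four sets $\mathcal{A}, \mathcal{B}, \mathcal{A}\cup\mathcal{B}, \mathcal{A}\cap\mathcal{B}$ contains a given pair $\{k,m\}$ and confirming that the only surviving contributions are the "split" pairs — together with being careful about the sign flip and the union/intersection swap induced by complementation, so that the inequality in the final statement comes out as $\leq 0$ rather than $\geq 0$. Everything else (the expansion of the quadratic form, the modular part being irrelevant) is routine. I would also note at the end that submodularity here does \emph{not} require $\bH$ to be positive semidefinite or the diagonal to have any particular sign, since the diagonal only enters the modular part; this is worth a remark as it is what makes the subsequent Hessian-projection construction flexible.
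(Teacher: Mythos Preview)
Your overall strategy---pass to the complement, split off the modular part, and compute the pairwise defect---is sound and, once corrected, is essentially the paper's argument (the paper's choice $\mathcal{X}=[r]\setminus\{i\}$, $\mathcal{Y}=[r]\setminus\{j\}$ is exactly your $\mathcal{A}=\{i\}$, $\mathcal{B}=\{j\}$ under complementation, and its ``straightforward'' sufficiency is your defect formula made explicit). But there is a genuine error in your first step.

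The claim ``$f$ is submodular if and only if $g$ is supermodular'' is false: complementation \emph{preserves} submodularity. The submodularity inequality $f(\mathcal{X})+f(\mathcal{Y})\ge f(\mathcal{X}\cup\mathcal{Y})+f(\mathcal{X}\cap\mathcal{Y})$ is symmetric in $\cup$ and $\cap$ on the right-hand side, so after substituting $f(\mathcal{S})=g(\mathcal{S}^{-})$ and using De~Morgan you get $g(\mathcal{A})+g(\mathcal{B})\ge g(\mathcal{A}\cap\mathcal{B})+g(\mathcal{A}\cup\mathcal{B})$, which is again submodularity of $g$, not supermodularity. Consequently your defect computation (which is correct) should be read as: $g$ is submodular iff the defect $q(\mathcal{A}\cup\mathcal{B})+q(\mathcal{A}\cap\mathcal{B})-q(\mathcal{A})-q(\mathcal{B})=\sum_{k\in\mathcal{A}\setminus\mathcal{B}}\sum_{m\in\mathcal{B}\setminus\mathcal{A}} w_{k,m}$ is always $\le 0$, which holds iff $w_{i,j}=H_{i,j}x_ix_j\le 0$ for all $i\ne j$. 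This gives the theorem directly, with no ``sign flip under complementation'' needed. Your final line ``translating back through the complement correspondence gives the claimed $H_{i,j}x_ix_j\le 0$'' is spurious---the condition on $\bH$ and $\bx$ involves no sets, so complementation cannot act on it---and it is precisely this unjustified flip that masks the earlier error and lets you land on the right inequality by accident.
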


\begin{proof}
    We prove the result using the definition of submodularity from \Cref{def_submodular}. The submodularity condition requires that:
    \begin{equation*}
    f(\mathcal{X}) + f(\mathcal{Y}) \geq f(\mathcal{X} \cup \mathcal{Y}) + f(\mathcal{X} \cap \mathcal{Y}).
    \end{equation*}
    Let $\mathcal{X} = [r] \setminus \{i\}$ and $\mathcal{Y} = [r] \setminus \{j\}$, for any $i, j \in [r]$ and $i \neq j$, we have $\mathcal{X} \cap \mathcal{Y} = [r]$ and $\mathcal{X} \cup \mathcal{Y} = [r] \setminus \{i,j\}$. Then, substituting them into the above inequality, we obtain 
    \begin{equation*}
        f\left([r] \setminus \{i\}\right) + f\left([r] \setminus \{j\}\right) \geq f\left([r]\right) + f\left([r] \setminus \{i, j\}\right).
    \end{equation*}
    Noting that all terms cancel except the interaction terms between indices $i$ and $j$, we have
    \begin{equation*}
        H_{i,i} x_i^2 + H_{j,j} x_j^2 \geq H_{i,i} x_i^2 + H_{j,j} x_j^2 + 2 H_{i,j} x_i x_j,
    \end{equation*}
    which simplifies to:
    \begin{equation*}
        H_{i,j} x_i x_j \leq 0.
    \end{equation*}

    Conversely, if $H_{i,j} x_i x_j \leq 0$ holds for any $i, j \in [r]$ and $i \neq j$, then it is straightforward to verify that the submodularity condition $f(\mathcal{X}) + f(\mathcal{Y}) \geq f(\mathcal{X} \cup \mathcal{Y}) + f(\mathcal{X} \cap \mathcal{Y})$ holds, for all $\mathcal{X}, \mathcal{Y} \subseteq \Omega$, since
    $\mathcal{X} \cup \mathcal{Y} \setminus \mathcal{X} \cap \mathcal{Y} = \left(\mathcal{X} \setminus \mathcal{Y} \right) \cup \left(\mathcal{Y} \setminus \mathcal{X}\right)$. This completes the proof.
\end{proof}

To clearly present our method, we begin by rewriting \Cref{eq_quadratic} as the following maximization problem:
\begin{equation}
\label{eq_quadratic_max}
   \begin{split}
        \max_{\mathcal{S} \subseteq \left[\sum_{\ell=1}^{L} r_{\ell}\right]} & \left\langle \left[\nabla_{\bsig}\mathcal{L}_{\text{ft}}\left(\Theta_{\text{LoRA}}\right)\right]_{\mathcal{S}^{-}},\left[\bsig\right]_{\mathcal{S}^{-}} \right\rangle - \frac{1}{2} \left[\bsig\right]_{\mathcal{S}^{-}}^{\top} \left[\nabla_{\bsig}^2 \mathcal{L}_{\text{ft}}\left(\Theta_{\text{LoRA}}\right)\right]_{\mathcal{S}^{-}} \left[\bsig\right]_{\mathcal{S}^{-}}, \\
        \text{s.t.} & \quad |\mathcal{S}| \leq b,
   \end{split}
\end{equation}
According to \Cref{theorem_submodular_Hessian}, the objective function is submodular if $\left[\nabla_{\bsig}^2 \mathcal{L}_{\text{ft}}\left(\Theta_{\text{LoRA}}\right)\right]_{i,j} \cdot \sigma_{i} \sigma_{j} \geq 0$, for all $i \neq j$. 
In particular, if we directly relax the Hessian to its diagonal, then the condition is trivially satisfied since $D_{i,j} = 0$, for $i, j \in \left[\sum_{\ell=1}^{L} r_{\ell}\right]$ and $i \neq j$. 
Thus, the diagonal relaxation used in \Cref{eq_quadratic_diagonal} leads to a submodular objective. 
In fact, since all variables in $\bsig$ are completely decoupled in the diagonal case, the objective becomes strictly separable, which is a stronger condition than submodularity. Consequently, the greedy algorithm finds the global maximizer exactly in this case.

However, relying solely on the diagonal elements of the Hessian sacrifices valuable second-order information. To balance between  tractability of solving \Cref{eq_quadratic_max} and the fidelity of Hessian information, we propose modifying the Hessian to preserve as much curvature information as possible while still ensuring submodularity. Since submodularity is determined solely by the quadratic term, we focus on projecting the Hessian to a nearby matrix that satisfies the submodularity condition. Specifically, we solve the following projection problem:
\begin{equation}
\label{eq_projection_submodular}
    \begin{split}
        \min_{\bG \in \mathbb{S}^{\sum_{\ell=1}^{L} r_{\ell}}} & \left\| \bG - \nabla_{\bsig}^2 \mathcal{L}_{\text{ft}}\left(\Theta_{\text{LoRA}}\right)\right\|_F^2,\\
        \text{s.t.} & \quad G_{i,j} \cdot \sigma_{i} \sigma_{j} \geq 0,~\text{for}~i, j \in \left[\sum_{\ell=1}^{L} r_{\ell}\right] ~\text{and}~i \neq j .
    \end{split}
\end{equation}
This projection preserves as much of the original Hessian as possible under the Frobenius norm, while enforcing the structural condition required for submodularity.
Since all $G_{i,j}$ terms are separable in both the objective and the constraint of \Cref{eq_projection_submodular}, we can derive a closed-form solution for the projection:
\begin{equation}
\label{eq_proj_hessian}
    G_{i,j} = \left\{ \begin{array}{ll}
        \left[\nabla_{\bsig}^2 \mathcal{L}_{\text{ft}}\left(\Theta_{\text{LoRA}}\right)\right]_{i,j}, & \text{if}~ \left[\nabla_{\bsig}^2 \mathcal{L}_{\text{ft}}\left(\Theta_{\text{LoRA}}\right)\right]_{i,j} \cdot \sigma_{i} \sigma_{j} \geq 0~\text{and}~i \neq j,\\
        0, & \text{if}~ \left[\nabla_{\bsig}^2 \mathcal{L}_{\text{ft}}\left(\Theta_{\text{LoRA}}\right)\right]_{i,j} \cdot \sigma_{i} \sigma_{j} < 0~\text{and}~i \neq j,\\
        \left[\nabla_{\bsig}^2 \mathcal{L}_{\text{ft}}\left(\Theta_{\text{LoRA}}\right)\right]_{i,i}, &\text{if}~i=j.
    \end{array} \right.
\end{equation}
This projection is computationally efficient due to its closed-form nature.
To convert \Cref{eq_quadratic_max} into a submodular function maximization problem, we apply this projection to the Hessian matrix, resulting in a modified matrix denoted by $\bG$.
We then solve the following cardinality-constrained submodular function maximization problem:
\begin{equation}
\label{eq_quadratic_max_submodular}
   \begin{split}
        \max_{\mathcal{S} \subseteq \left[\sum_{\ell=1}^{L} r_{\ell}\right]} & \left\langle \left[\nabla_{\bsig}\mathcal{L}_{\text{ft}}\left(\Theta_{\text{LoRA}}\right)\right]_{\mathcal{S}^{-}},\left[\bsig\right]_{\mathcal{S}^{-}} \right\rangle - \frac{1}{2} \left[\bsig\right]_{\mathcal{S}^{-}}^{\top} \left[\bG\right]_{\mathcal{S}^{-}} \left[\bsig\right]_{\mathcal{S}^{-}}, \\
        \text{s.t.} & \quad |\mathcal{S}| \leq b,
   \end{split}
\end{equation}
by greedy algorithms. 
Compared to the naive diagonal approximation in \Cref{eq_quadratic_diagonal}, which discards all off-diagonal Hessian information, the projected Hessian $\bG$ preserves additional curvature structure while still ensuring submodularity. This leads to more accurate pruning of singular values and improved rank determination.
Although solving \Cref{eq_quadratic_max_submodular} exactly remains NP-hard, the greedy algorithm provides a tractable approximation with a known $1-1/e$ guarantee. This trade-off reflects our goal of striking a balance between two competing aspects: preserving second-order information from the Hessian and ensuring the problem remains solvable with provable guarantees.

The complete algorithm of the proposed SubLoRA for LoRA rank determination is summarized in \Cref{alg_rank_determination}.
Stage 1 involves standard LoRA fine-tuning, where we apply (stochastic) gradient descent to minimize the loss function in \Cref{eq_lora_finetuning}, providing an approximate solution. This step follows classical LoRA training procedures and is not the primary focus of this work.
Stages 2 and 3 constitute the core contribution of this paper. In Stage 2, we approximate the objective in \Cref{eq_prunening} using a second-order Taylor expansion, as formulated in \Cref{eq_quadratic}. To balance the trade-off between computational tractability and fidelity to the curvature of the loss landscape, we project the Hessian matrix according to the constraint in \Cref{eq_projection_submodular}, using the closed-form solution in \Cref{eq_proj_hessian}. This projection ensures that the resulting quadratic objective in \Cref{eq_quadratic_max_submodular} is submodular.
In Stage 3, we solve the resulting cardinality-constrained submodular function maximization problem using either the greedy or randomized greedy algorithm. The final output is a selected subset of singular values to retain, while the remaining LoRA components are discarded accordingly.

\begin{algorithm}[t!]
\caption{SubLoRA}
\label{alg_rank_determination}
\begin{algorithmic}[1]
\State \textbf{Stage 1 (LoRA fine-tuning):} Obtain LoRA parameters $\Theta_{\text{LoRA}}$ by minimizing \Cref{eq_lora_finetuning}
\State \textbf{Stage 2 (Problem formulation):} Calculate the gradient  vector $\nabla_{\bsig}\mathcal{L}_{\text{ft}}\left(\Theta_{\text{LoRA}}\right)$ and Hessian matrix $\nabla_{\bsig}^2 \mathcal{L}_{\text{ft}}\left(\Theta_{\text{LoRA}}\right)$ with respect to the vector of singular values $\bsig$. 
\State Project the Hessian matrix and obtain $\bG$ as in \Cref{eq_proj_hessian} 
\State Formulate the submodular function maximization problem as in \Cref{eq_quadratic_max_submodular}
\State \textbf{Stage 3 (Solvers):} Apply greedy algorithm (\Cref{alg_greedy}) or randomized greedy algorithm (\Cref{alg_randomized_greedy}) to obtain the set of singular values to be kept
\end{algorithmic}
\end{algorithm}

In submodular function maximization, it is well known that without the monotonicity condition, the greedy algorithm (\Cref{alg_greedy}) may be trapped in suboptimal or ill-conditioned points, potentially leading to poor performance~\cite{krause2014submodular,fujishige2005submodular}. 
To mitigate this, randomized greedy algorithms are often adopted, which also provide theoretical guarantees even in the non-monotone setting, as shown in \Cref{theorem_greedy_algorithm}.
However, our empirical observations show that the standard greedy algorithm consistently outperforms the randomized variant in most cases. Notably, in \Cref{alg_rank_determination}, we do not explicitly enforce any conditions ensuring the monotonicity of the objective function. This prompts us to question whether the objective in \Cref{eq_quadratic_max_submodular} is, in fact, monotone in practice. The following lemma provides a sufficient condition under which the objective function is monotone.

\begin{lemma}
    Suppose the LoRA fine-tuning stage (Stage 1 of \Cref{alg_rank_determination}) minimizes the loss in \Cref{eq_lora_finetuning} to a point satisfying the second-order necessary optimality conditions. Then, the objective function in \Cref{eq_quadratic_max_submodular} is monotone.
\end{lemma}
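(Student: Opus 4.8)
The plan is to use the second-order necessary optimality conditions to annihilate the linear part of the objective, and then reduce monotonicity to a purely termwise sign argument about the projected Hessian $\bG$. First I would spell out what the hypothesis gives at the fine-tuned point $\Theta_{\text{LoRA}}$: the second-order necessary conditions for the unconstrained minimization of \Cref{eq_lora_finetuning} are $\nabla_{\bsig}\mathcal{L}_{\text{ft}}(\Theta_{\text{LoRA}}) = \bm{0}$ and $\nabla_{\bsig}^2 \mathcal{L}_{\text{ft}}(\Theta_{\text{LoRA}}) \succeq 0$. Substituting the vanishing gradient into the objective of \Cref{eq_quadratic_max_submodular}, the inner-product term drops out and the objective collapses to
\[
f(\mathcal{S}) \;=\; -\tfrac{1}{2}\,[\bsig]_{\mathcal{S}^{-}}^{\top} [\bG]_{\mathcal{S}^{-}} [\bsig]_{\mathcal{S}^{-}} \;=\; -\tfrac{1}{2}\sum_{i,j \in \mathcal{S}^{-}} G_{i,j}\,\sigma_i\sigma_j .
\]

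The second step is to show that every summand $G_{i,j}\sigma_i\sigma_j$ is non-negative. For $i\neq j$ this is exactly the constraint imposed by the projection \Cref{eq_projection_submodular}, or equivalently it can be read directly off the closed form \Cref{eq_proj_hessian}: in both branches $G_{i,j}\sigma_i\sigma_j \geq 0$. For $i=j$, the closed form gives $G_{i,i} = [\nabla_{\bsig}^2 \mathcal{L}_{\text{ft}}(\Theta_{\text{LoRA}})]_{i,i}$, and since a positive semi-definite matrix has non-negative diagonal entries (apply it to $\bm{e}_i$), we get $G_{i,i}\geq 0$, hence $G_{i,i}\sigma_i^2 \geq 0$. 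So $-2f(\mathcal{S})$ is a sum of non-negative terms indexed by pairs in $\mathcal{S}^{-}\times \mathcal{S}^{-}$.

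The final step is the monotonicity bookkeeping. Take $\mathcal{X}\subseteq\mathcal{Y}$; then $\mathcal{Y}^{-}\subseteq\mathcal{X}^{-}$, so $\mathcal{X}^{-}\times\mathcal{X}^{-} \supseteq \mathcal{Y}^{-}\times\mathcal{Y}^{-}$, and therefore $-2f(\mathcal{X}) = \sum_{(i,j)\in\mathcal{X}^{-}\times\mathcal{X}^{-}} G_{i,j}\sigma_i\sigma_j \geq \sum_{(i,j)\in\mathcal{Y}^{-}\times\mathcal{Y}^{-}} G_{i,j}\sigma_i\sigma_j = -2f(\mathcal{Y})$, i.e.\ $f(\mathcal{X})\leq f(\mathcal{Y})$, which is monotonicity in the sense of \Cref{def_monotone}.

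I do not expect a genuine obstacle here; the one place that needs care is the complement bookkeeping, since $f$ is defined through $\mathcal{S}^{-}$, so "enlarging $\mathcal{S}$" translates into "deleting non-negative terms from the quadratic form," and the direction of the inequality must be tracked accordingly. It is also worth flagging the scope of the hypothesis: the argument uses that stationarity gives $\nabla_{\bsig}\mathcal{L}_{\text{ft}}=\bm{0}$ exactly, which is the unconstrained second-order necessary condition; if the singular values were instead subject to sign constraints the first-order condition would be a KKT condition and the linear term would not vanish identically, so the statement is tied to the unconstrained stationarity assumption made in the lemma.
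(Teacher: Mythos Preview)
Your proposal is correct and follows essentially the same approach as the paper: both use the second-order necessary conditions to drop the linear term and obtain non-negative diagonal entries, then combine this with the projection constraint to conclude $G_{i,j}\sigma_i\sigma_j \geq 0$ for all $i,j$. The only cosmetic difference is that the paper verifies monotonicity by computing the marginal gain $f(\mathcal{S}\cup\{i\})-f(\mathcal{S})$ explicitly, whereas you argue directly via set inclusion on the double sum; these are equivalent.
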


\begin{proof}
    The second-order necessary conditions for the optimality of \Cref{eq_lora_finetuning} imply that
    \begin{equation*}
        \nabla_{\bsig}\mathcal{L}_{\text{ft}}\left(\Theta_{\text{LoRA}}\right) = \bm{0}, \quad \nabla^2_{\bsig}\mathcal{L}_{\text{ft}}\left(\Theta_{\text{LoRA}}\right) \succeq \bm{0}.
    \end{equation*}
    The positive semidefiniteness of the Hessian ensures that its diagonal entries are non-negative. Therefore, the objective function in \Cref{eq_quadratic_max_submodular} simplifies to:
    \begin{equation*}
   \begin{split}
        \max_{\mathcal{S} \subseteq \left[\sum_{\ell=1}^{L} r_{\ell}\right]} &  - \frac{1}{2} \left[\bsig\right]_{\mathcal{S}^{-}}^{\top} \left[\bG\right]_{\mathcal{S}^{-}} \left[\bsig\right]_{\mathcal{S}^{-}}, \\
        \text{s.t.} & \quad |\mathcal{S}| \leq b,
   \end{split}
\end{equation*}
where the projected matrix $\bG$ satisfies $G_{i,j} \cdot \sigma_i \sigma_j \geq 0$ for all $i, j \in [\sum_{\ell=1}^{L} r_{\ell}]$.
For any set $\mathcal{S} \subseteq [\sum_{\ell=1}^{L} r_{\ell}]$ and $i \in [\sum_{\ell=1}^{L} r_{\ell}]$, without  loss of generality that $i \notin \mathcal{S}$, then
\begin{equation*}
    f(\mathcal{S} \cup \{i\}) - f(\mathcal{S}) = 2 \sum_{i^{\prime} \in [\sum_{\ell=1}^{L} r_{\ell}] \setminus \left(\mathcal{S} \cup \{i\} \right)} G_{i^{\prime},i} \cdot \sigma_{i^{\prime}} \sigma_{i} + G_{i,i} \cdot  \sigma_{i}^2 \geq 0.
\end{equation*}
Therefore, the set-valued function $f(\mathcal{S}) :=  - \frac{1}{2} \left[\bsig\right]_{\mathcal{S}^{-}}^{\top} \left[\bG\right]_{\mathcal{S}^{-}} \left[\bsig\right]_{\mathcal{S}^{-}}$ is monotone according to   \Cref{def_monotone}. 
\end{proof}

Note that the above lemma requires that the LoRA fine-tuning step satisfies second-order necessary optimality conditions.
Recent theoretical results~\cite{ge2015escaping,ge2016matrix} suggest that many widely adopted optimizers, such as Adam and SGD, converge to points satisfying second-order optimality conditions with high probability under mild assumptions.
As a result, the objective in \Cref{eq_quadratic_max_submodular} tends to exhibit approximate monotonicity in practice after a sufficient number of iterations in the fine-tuning stage.
% Stochastic gradient-based optimizers commonly used in LoRA fine-tuning, such as Adam or SGD, can often approach stationary points satisfying second-order necessary conditions given a sufficient number of iterations.
% Recent theoretical results~\cite{ge2015escaping,ge2016matrix} suggest that gradient-based optimizers converge to points satisfying second-order optimality conditions with high probability under mild assumptions.
Importantly, the greedy algorithm is known to achieve a higher approximation ratio than its randomized variant when the objective function is both submodular and monotone. This observation provides a theoretical explanation for the empirical finding that the greedy algorithm often outperforms the randomized greedy algorithm in \Cref{alg_rank_determination}. Therefore, when Stage 1 of SubLoRA is optimized to a sufficient degree, it is well-justified to adopt the greedy algorithm in Stage 3. Under these conditions, the objective becomes submodular and (approximately) monotone, ensuring that the greedy selection is both efficient and effective.

\subsection{Alternating Training and Rank Determination}
The previous sections focus on a training-free, post-hoc method for LoRA rank determination, where the LoRA components are fixed and the effective rank is reduced by pruning unimportant singular values. Here, we extend our method to a more integrated setting, where LoRA fine-tuning and rank determination are performed simultaneously. Specifically, we aim to solve the following constrained optimization problem:
\begin{equation}
    \label{eq_lora_rank_both}
    \begin{split}
        \min_{\Theta_{\text{LoRA}}} & \quad \mathcal{L}_{\text{ft}}\left(\Theta_{\text{LoRA}}\right),\\
        \text{s.t.} & \quad \left\|\bsig \right\|_{0} \leq b,
    \end{split}
\end{equation}
where a sparsity constraint is imposed on the singular values $\bsig$ to enforce a rank budget $b$. 
However, solving \Cref{eq_lora_rank_both} directly is NP-hard due to the non-convex $\ell_0$ constraint.
To address this challenge, we propose an alternating training strategy that interleaves LoRA parameter updates with rank determination by singular value pruning. The full procedure is described in \Cref{alg_lora_update_rank_determination}.

The algorithm operates in an outer loop of alternating phases. In the parameter update phase, we fine-tune all LoRA parameters using standard gradient-based optimizers for a fixed number of iterations, without imposing any rank constraints. At this stage, the total rank may exceed the target budget $b$. 
In the subsequent rank determination phase, we fix the current LoRA parameters and apply the proposed rank determination method via submodular function maximization to eliminate unimportant singular values, reducing the effective rank under the constraint $\|\bsig\|_0 \leq b$.
The alternating process is repeated for several rounds. From an optimization perspective, the rank determination step with pruning encourages the model to suppress redundant components, therefore guiding the optimization trajectory toward a low-rank solution. 
In turn, subsequent parameter updates refine the remaining components and further reduce the loss. This synergistic interaction enables the model to approximately solve \Cref{eq_lora_rank_both}, achieving both high accuracy and parsimony in representation.

\begin{algorithm}[t!]
\caption{Alternating SubLoRA}
\label{alg_lora_update_rank_determination}
\begin{algorithmic}[1]
\For{$t=0,1,\cdots,T$}
\State \textbf{Stage 1 (Parameter update):} Run several steps of (stochastic) optimization algorithms for updating LoRA parameters $\Theta_{\text{LoRA}}$.
\State \textbf{Stage 2 (Problem formulation):} Calculate the gradient  vector $\nabla_{\bsig}\mathcal{L}_{\text{ft}}\left(\Theta_{\text{LoRA}}\right)$ and Hessian matrix $\nabla_{\bsig}^2 \mathcal{L}_{\text{ft}}\left(\Theta_{\text{LoRA}}\right)$ with respect to the vector of singular values $\bsig$. 
\State Project the Hessian matrix and obtain $\bG$ as in \Cref{eq_proj_hessian} 
\State Formulate the submodular function maximization problem as in \Cref{eq_quadratic_max_submodular}
\State \textbf{Stage 3 (Solvers):} Apply the greedy algorithm (\Cref{alg_greedy}) or randomized greedy algorithm (\Cref{alg_randomized_greedy}) to obtain the set of singular values to be retained
\EndFor
\end{algorithmic}
\end{algorithm}

\section{Applications}
\label{sec_applications}
In this section, we explore potential real-world applications of the proposed SubLoRA method. Specifically, we focus on applying SubLoRA to LoRA fine-tuning of PINNs for solving PDEs. 
PINNs have recently gained attention for their ability to incorporate physical laws directly into the learning process. Efficient transfer learning in PINNs is highly desirable, especially when adapting models to PDEs with new boundary conditions or physical parameters. Furthermore, the inherent smoothness of PDE solutions and PINN loss landscapes makes them particularly well-suited for using second-order information. This is well consistent with our approach, where curvature information contributes to effective rank determination of LoRA.

\subsection{Physics-Informed Machine Learning}
We begin by illustrating how the proposed SubLoRA method can be applied to LoRA fine-tuning of PINNs for solving PDEs.
We consider solving a class of PDEs with different physical parameters $\blam \in \mathcal{P}$:
\begin{equation*}
    \begin{split}
        & \mathcal{D}[\bu_{\blam};\blam] = g(\bx;\blam),\quad\bx \in \Omega, \\
        & \mathcal{B}[\bu_{\bm{\lambda}};\bm{\lambda}] = h(\bx;\blam),\quad\bx \in \partial \Omega, 
    \end{split}
\end{equation*}
where $\mathcal{D}$ and $\mathcal{B}$ are differential operators defined in the interior domain $\Omega$ and on its boundary $\partial \Omega$, respectively. 
The solution associated with a specific physical parameter $\blam$ is denoted as $\bu_{\blam}$.

In physics-informed machine learning~\cite{karniadakis2021physics,raissi2019physics}, neural networks act as surrogates for PDE solutions. 
Let $\phi\left(\bx;\Theta_{\text{pt}}\right)$ denote a neural network parameterized by $\Theta_{\text{pt}}=\left\{ \bW_{\text{pt},1},\bW_{\text{pt},2},\cdots,\bW_{\text{pt},L}\right\}$. 
The training objective for solving the PDE with physical parameters $\blam$ is given by
\begin{align}
       & \mathcal{L}\left(\Theta_{\text{pt}}\right)   = \frac{\mu}{N} \sum_{i=1}^{N} \left\| \mathcal{D}\left[\phi\left(\bx_{i};\Theta_{\text{pt}}\right);\blam\right] - \by_{i} \right\|_2^2 \nonumber \\*
        & \qquad\qquad + \frac{\mu_{b}}{N_{b}} \sum_{j=1}^{N_b} \left\| \mathcal{B}\left[\phi\left(\hat{\bx}_{j};\Theta_{\text{pt}}\right);\blam\right] - \bv_{j} \right\|_2^2,
\end{align}
where $\left\{ (\bx_{i},\by_{i})\right\}_{i \in [N]}$ are interior domain observations with $\by_{i} = g(\bx_{i};\blam)$, and $\left\{ (\hat{\bx}_{j},\bv_{j})\right\}_{j \in [N_{b}]}$ are boundary observations with $\bv_{j} = h(\hat{\bx}_{j};\blam)$. The hyperparameters $\mu >0$ and $\mu_{b}>0$ control the trade-off between the interior residual and the boundary condition residual.

Our objective is to train neural networks (e.g., MLPs) to efficiently approximate solutions $\{\bu_{\blam}\}_{\blam \in \mathcal{P}}$ for the entire class of PDEs. 
Due to structural similarities in the PDE operators, we assume that the solutions exhibit shared underlying patterns. 
Rather than relearning these shared structures from scratch for each individual PDE instance, we adopt LoRA to encode the common information across tasks while enabling efficient adaptation to task-specific variations. This approach not only reduces redundant computation but also significantly improves storage efficiency after fine-tuning.

Given a PDE characterized by physical parameters $\blam_{0}$, we first pre-train a neural network on this PDE, where the resulting model $\phi(\cdot;\Theta_{\text{pt}})$ approximates the corresponding solution of $\bu_{\blam_{0}}$. 
The pre-trained parameters $\Theta_{\text{pt}}$ thus capture the shared knowledge across the PDE class.
For a new PDE with different physical parameters $\hat{\blam} \in \mathcal{P}$, we fine-tune the model using LoRA to obtain $\phi(\cdot;\Theta_{\text{ft}})$, as formulated in \Cref{eq_lora_svd}, which approximates the new solution $\bu_{\hat{\blam}}$. 
The fine-tuning process is efficient because the shared information has already been included, eliminating the need for redundant learning and thus resulting in faster convergence.

Given practical restrictions on storage and computation, it is necessary to determine a compact, task-specific rank for each LoRA component across layers. 
To this end, we apply the proposed rank determination method outlined in \Cref{alg_rank_determination}, which allows training-free pruning of singular values. This step aims to reduce model complexity while minimizing performance degradation. Furthermore, to enhance the overall performance, we recommend the alternating training strategy (alternating SubLoRA) that combines LoRA fine-tuning with iterative rank determination, as described in \Cref{alg_lora_update_rank_determination}.

\subsection{Smoothness and Hessian Information}
Wang et al.~\cite{wang2025transfer} investigate the LoRA fine-tuning of PINNs, but they do not consider  rank determination.
In their work, the LoRA ranks are treated as fixed hyperparameters rather than optimized quantities. To the best of our knowledge, our work is the first to explore both LoRA fine-tuning and rank determination within the context of PINNs.
In comparison to existing approaches such as AdaLoRA~\cite{zhang2023adaptive}, which rely mainly on first-order information (i.e., gradient-based linear approximations as in \Cref{eq_linear_relaxation}), our method utilizes second-order information through quadratic expansion on the objective. This richer characterization allows us to more accurately capture the local geometry of the loss landscape, as formalized in the nonlinear formulation of \Cref{eq_prob_nonlinear}.
In large-scale applications like language modeling, both pre-training and LoRA fine-tuning are typically conducted with first-order methods (e.g., Adam or SGD). 
This is largely due to the prohibitive computational cost of computing full Hessians in high-dimensional parameter spaces. In such settings, gradients alone often suffice to guide optimization effectively, and Hessian-based methods are rarely used.
By contrast, the situation is different in training PINNs, where the loss landscape is often highly nonconvex and more complex due to the inclusion of differential operators. 
Consequently, second-order optimization methods are more commonly adopted. For example, the original PINNs paper~\cite{raissi2019physics} adopts the L-BFGS optimizer, a quasi-Newton method with curvature information. 
Additionally, natural gradient methods, which approximate the Hessian using the Fisher information matrix, have recently been investigated in the PINN literature~\cite{muller2023achieving,mckay2025nearoptimal}. Theoretical results~\cite{bonfanti2024challenges} further demonstrate that Newton-type methods can significantly improve convergence rates for training PINNs.
These observations motivate the use of Hessian-informed techniques for LoRA rank determination in PINNs. Unlike tasks in natural language processing or computer vision, where Hessian information may provide relatively limited benefit, PINNs tend to gain substantially from second-order information. 
Therefore, this paper primarily focuses on applying and evaluating the proposed SubLoRA, which is a Hessian-based rank determination method, within the PINNs setting, particularly for transfer learning across PDEs.

The exact computation of the Hessian matrix is often a major concern in deep neural networks due to the large number of trainable parameters. However, in the context of SubLoRA, this challenge is significantly mitigated. In contrast to traditional second-order optimization methods such as Newton’s method, which require computing the Hessian with respect to all model parameters, SubLoRA computes the Hessian $\nabla_{\bsig}^2 \mathcal{L}_{\text{ft}}(\Theta_{\text{LoRA}})$ only with respect to the singular values $\bsig$. This dramatically reduces the computational and memory requirements.
For example, consider an MLP with dimension $n$ and $L$ layers. The full Hessian over all parameters would involve approximately $\mathcal{O}\left(L^2n^4\right)$ elements. In contrast, SubLoRA maintains only $\mathcal{O}\left(L\right)$ singular values (assuming a fixed LoRA rank per layer), resulting in a Hessian of size  $\mathcal{O}\left(L^2\right)$. Therefore, both the computation and storage of the Hessian $\nabla_{\bsig}^2 \mathcal{L}_{\text{ft}}(\Theta_{\text{LoRA}})$ are substantially more efficient in SubLoRA compared to general second-order optimization methods.

Although our method is broadly applicable to any setting where the loss function is twice differentiable, we believe that the smoothness and structural properties of PINNs make them especially well suited to benefit from this approach. Applying SubLoRA in other domains, such as language models and vision tasks, is a promising direction for future work.

\section{Experiments}
\label{sec_experiments}
In this section, we present comprehensive experiments to evaluate the effectiveness of the proposed SubLoRA method for LoRA rank determination in the context of fine-tuning PINNs. We first examine the training-free setting, where rank determination is performed in the post-training phase without updating the LoRA parameters, as described in \Cref{alg_rank_determination}.
In addition, to further improve model performance, we implement the alternating optimization strategy, as presented in \Cref{alg_lora_update_rank_determination}, where LoRA fine-tuning and rank determination are performed iteratively to better satisfy the rank budget while maintaining the prediction accuracy.

\subsection{Training-Free Rank Determination}
\label{sec_experiments_training_free}
We evaluate the training-free rank determination introduced in \Cref{alg_rank_determination} for solving a class of PDEs with varying physical parameters, as detailed in \Cref{sec_applications}. 
Three representative types of PDEs are considered: elliptic, parabolic, and hyperbolic equations. 
For a given PDE, we first pre-train the neural network (MLPs) and obtain network parameters $\Theta_{\text{pt}}$. We then fine-tune the network by LoRA (with uniform pre-defined ranks across all layers) on a new PDE with another physical parameter and obtain $\Theta_{\text{LoRA}}$. 
The total LoRA rank is intentionally set much higher than the desired budget.
We then apply the proposed rank determination method (e.g., SubLoRA) to remove less informative components, producing budget-constrained model parameters $\hat{\Theta}_{\text{ft}}$ by reallocating rank across layers. 
To evaluate the quality of the rank determination, we compute the relative error (rel) between the predicted solution $\phi(\bx;\Theta)$ and the ground truth solution $\bu(\bx)$ on a test dataset $\{(\bx_i, \bu(\bx_i))\}_{i=1}^{N_t}$, defined as $\sqrt{\frac{\sum_{i=1}^{N_t} \|\phi(\bx_i;\Theta) - \bu(\bx_i)\|_2^2}{\sum_{i=1}^{N_t} \|\bu(\bx_i)\|_2^2}}$.

In the experiments, we compare four different methods of LoRA rank determination. The first is the linearized objective function method in \Cref{eq_linear_relaxation}, denoted as ``LinearLoRA''. 
The second method, named ``DiagLoRA'', utilizes the diagonal second-order approximation introduced in \Cref{eq_quadratic_diagonal}. 
The third method ``SubLoRA'' corresponds to the proposed approach that formulates the rank selection problem as a submodular maximization using the projected Hessian in \Cref{eq_quadratic_max_submodular}, solved by either greedy or randomized greedy algorithms. 
To evaluate the effectiveness of the Hessian projection step in \Cref{eq_projection_submodular}, we also include a fourth method, termed ``HessLoRA'', which uses the exact (unprojected) Hessian from \Cref{eq_quadratic_max} and applies the greedy algorithm for optimization.
All experiments are conducted using a four-layer MLP architecture with three hidden layers. 
Each hidden layer consists of 1000 neurons and is fine-tuned with LoRA using a fixed pre-defined rank of 50 per layer, resulting in a total initial LoRA rank of 100. We vary the total rank budget in the experiments, and each method determines its own layer-wise rank allocation under the given constraint.

\subsubsection{Elliptic Equations}
\label{sec_elliptic}
We consider a class of elliptic equations with varying physical parameters $\blam \in \mathbb{R}^{2}$:
\begin{align}
        - \nabla \cdot \left(a(\bm{x}) \cdot \nabla u(\bm{x};\blam)\right) + \left\| \nabla u(\bm{x};\blam)\right\|_2^2   = g(\bm{x};\blam), \quad \bm{x} &\in \Omega, \nonumber\\*
        u(\bm{x};\blam) = h(\bm{x};\blam), \quad \bm{x} &\in \partial \Omega,
       \end{align} 
where the domain is defined as $\Omega = \{\bx \in \mathbb{R}^{2}: \left\|\bx \right\|_2 \leq 1\}$, and the coefficient function is given by $a(\bx) = 1 + \frac{1}{2}\left\| \bx\right\|_2^2$. The exact solution $\bu(\bx;\blam)$ with the parameter $\blam$ is defined as $u(\bx;\blam) = \sin \big(\frac{\pi \lambda_1}{2}\big( 1 - \left\|\bm{x} \right\|_2 \big)^{2.5} \big) + \lambda_2 \cdot \sin \big(\frac{\pi}{2}\big( 1 - \left\|\bm{x} \right\|_2 \big) \big)$. 
Here, $\lambda_1$ modulates the frequency and $\lambda_2$ controls the variation level. 
We first pre-train a MLP model on the PDE with $\blam =(1,0)$, and then fine-tune it by standard LoRA on PDEs with $\blam=(1,1)$, $(1,5)$, and $(2,1)$. 
After fine-tuning, we perform rank determination under a fixed total budget using the following methods: the linearized first-order method (LinearLoRA), the diagonal second-order approximation (DiagLoRA), the proposed submodular second-order method with greedy (SubLoRA-G) and randomized greedy (SubLoRA-R) solvers, and the exact second-order method without projection, solved using the greedy algorithm (HessLoRA-G).

Experimental results are presented in \Cref{fig_static_elliptic}. We observe that all second-order methods, including DiagLoRA, SubLoRA, and HessLoRA, consistently and significantly outperform the linearized method LinearLoRA in terms of both training loss and validation error. This supports our key motivation that linear approximation is insufficient for effective rank determination.
The randomized greedy algorithm shows inferior performance compared to its deterministic counterpart, primarily due to its weaker approximation guarantee, as established in \Cref{theorem_greedy_algorithm}. Among second-order methods, SubLoRA outperforms DiagLoRA, highlighting the benefits of incorporating richer Hessian information beyond diagonal elements. 
This validates the necessity of designing the submodular framework in this work, as a more promising alternative to the naive diagonal approximation.
Comparing the submodular function method SubLoRA-G and the exact Hessian method HessLoRA-G, we find that their performances are comparable, with the exact Hessian method occasionally exhibiting slightly better results. 
This can be attributed to the fact that the exact Hessian often approximately satisfies the projection condition in \Cref{eq_proj_hessian}, making the resulting objective function nearly submodular, and the greedy algorithm performs well without the Hessian projection.
For example, in \Cref{fig_static_elloptic_5_loss} and \Cref{fig_static_elloptic_5_error}, the learning curves of both methods perform closely, indicating that the projected Hessian is effectively identical to the original Hessian and that submodularity is already present.

We also analyze and compare the computational costs of different rank determination methods. The CPU/GPU runtimes are summarized in \Cref{table_runtime}. Notably, SubLoRA-G does not introduce significant overhead compared to LinearLoRA and DiagLoRA, demonstrating its efficiency. 
We also observe that the runtime of SubLoRA-G remains relatively stable once the rank budget exceeds a certain threshold. This indicates that our method is robust with respect to the budget size, as the computational cost does not grow significantly with larger budgets.
SubLoRA-R requires slightly higher runtime due to repeated random number generation in each iteration of the randomized greedy algorithm (\Cref{alg_randomized_greedy}). Furthermore, since rank determination is performed only once, the additional computational cost introduced by SubLoRA is negligible compared to the overall LoRA fine-tuning runtime, which is approximately 5 minutes. Specifically, Stage 1 of \Cref{alg_rank_determination}, which involves standard LoRA fine-tuning, takes around 5 minutes in total, while the time spent in Stages 2 and 3 for rank determination, as reported in \Cref{table_runtime}, is negligible (in the order of a few seconds) and does not significantly affect the overall computation time.

\begin{figure}[tb!]
    \centering
    \subfigure[$\blam=(1,1)$, training loss]{%
        \includegraphics[width=0.30\textwidth]{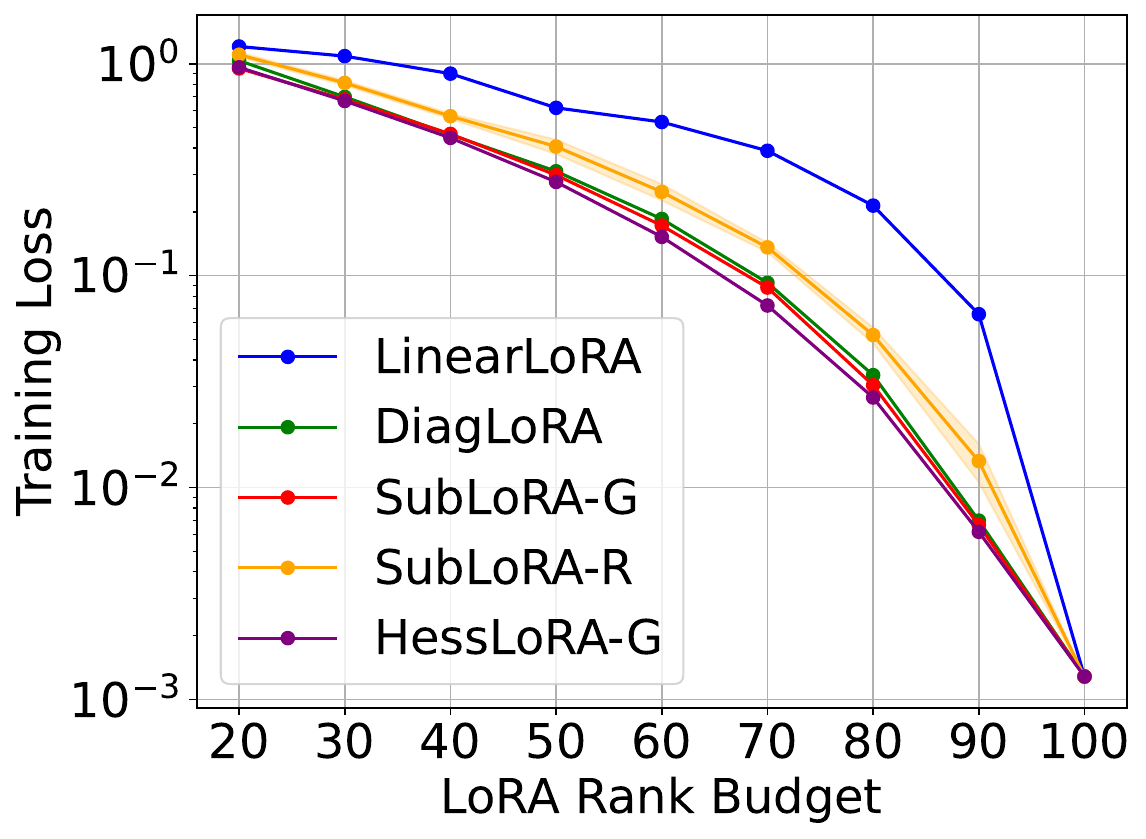} \label{fig_static_elloptic_1_loss}}
    % \hfill
    \subfigure[$\blam=(1,5)$, training loss]{%
        \includegraphics[width=0.30\textwidth]{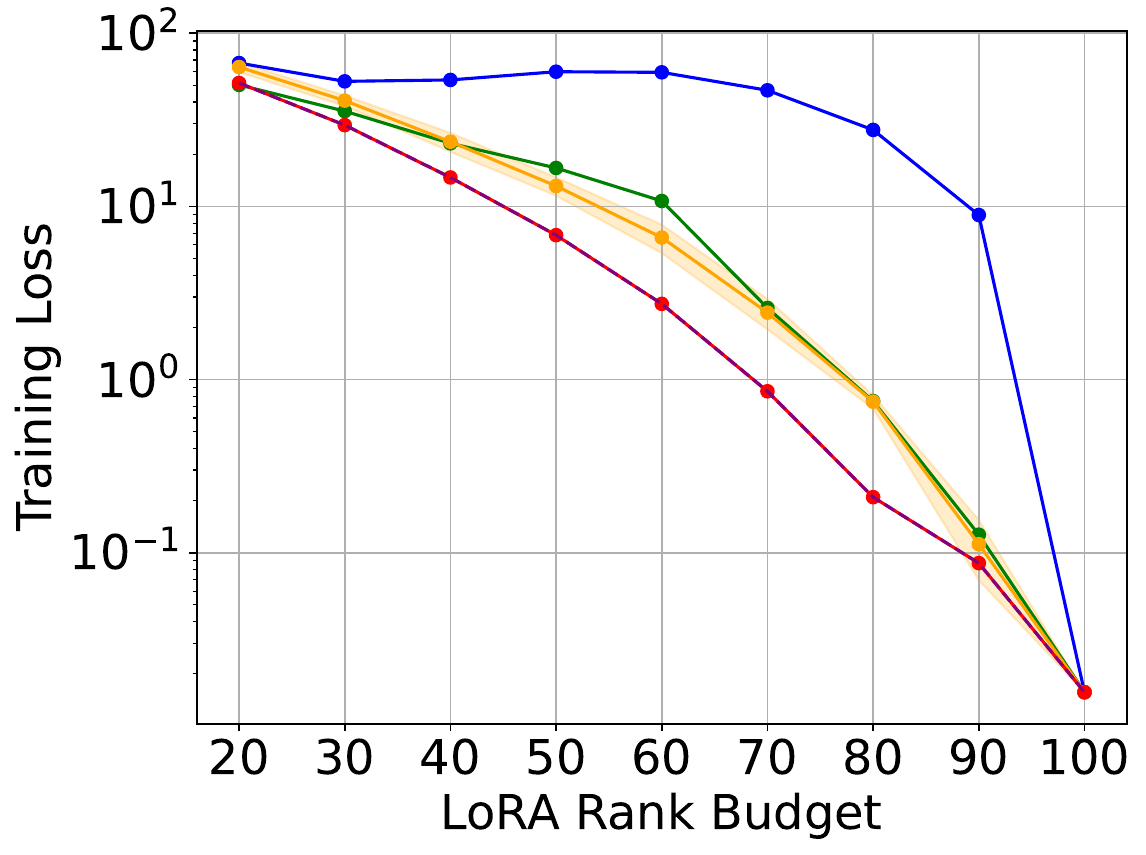} \label{fig_static_elloptic_5_loss}}
    \subfigure[$\blam=(2,1)$, training loss]{%
        \includegraphics[width=0.30\textwidth]{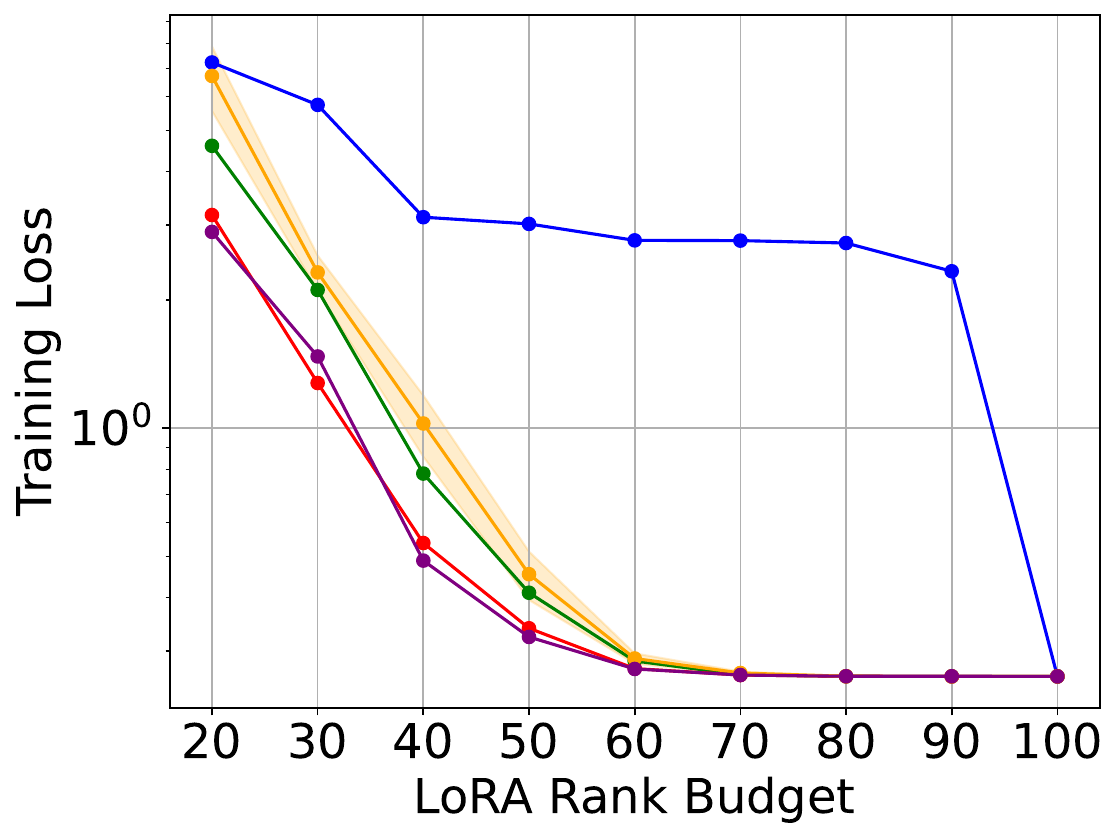} \label{fig_static_elloptic_1_2_loss}}\\
    \subfigure[$\blam=(1,1)$, relative error]{%
        \includegraphics[width=0.30\textwidth]{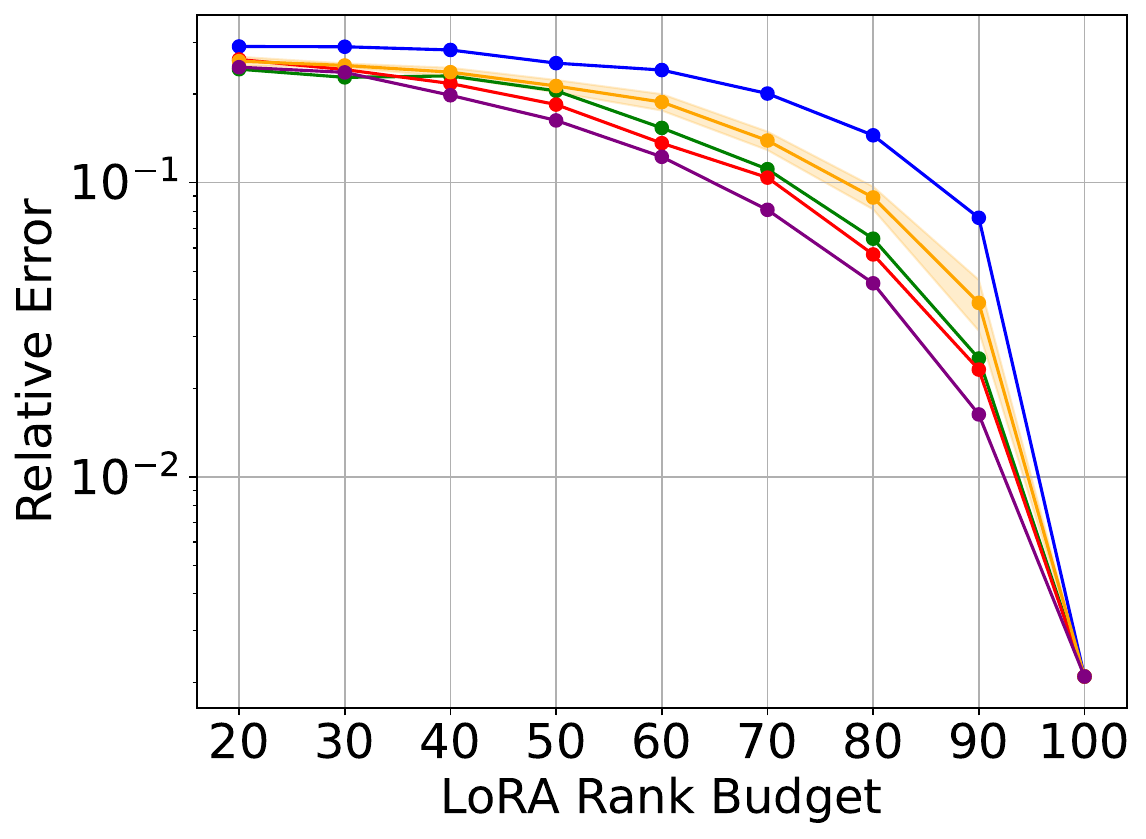} \label{fig_static_elloptic_1_error}}
    \subfigure[$\blam=(1,5)$, relative error]{%
        \includegraphics[width=0.30\textwidth]{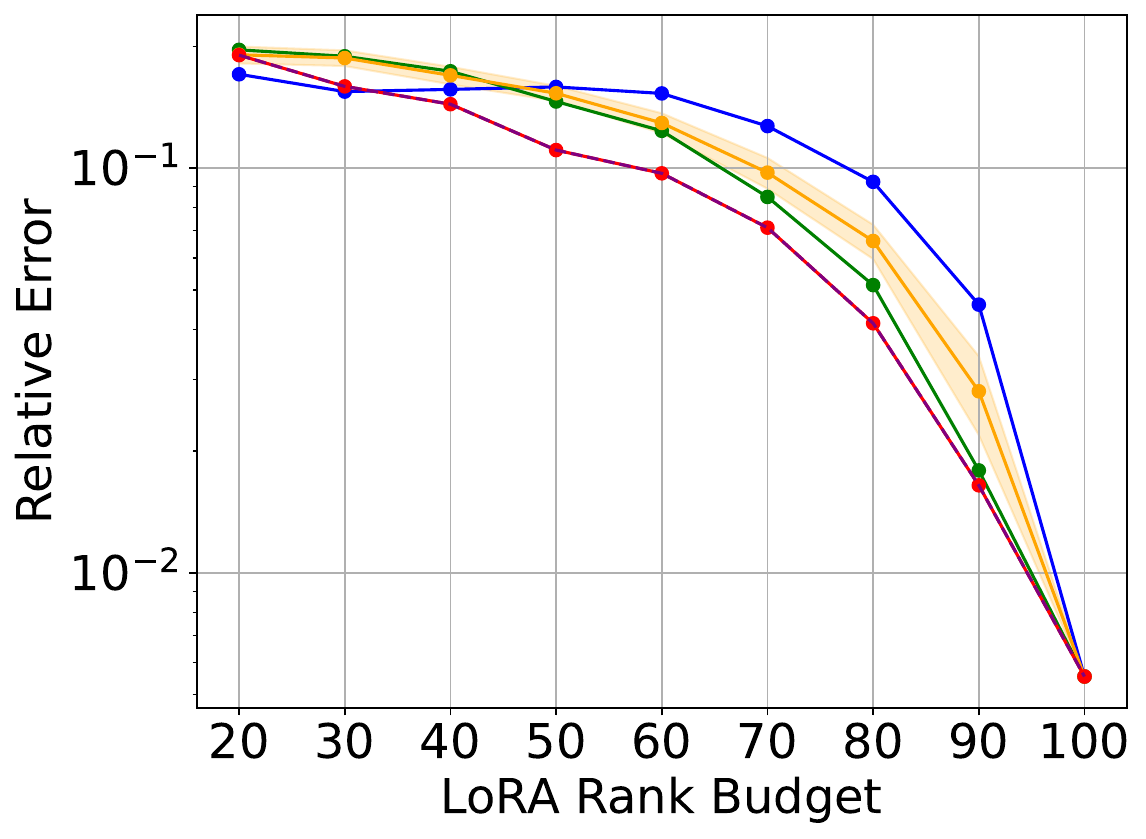} \label{fig_static_elloptic_5_error}}
    \subfigure[$\blam=(2,1)$, relative error]{%
        \includegraphics[width=0.30\textwidth]{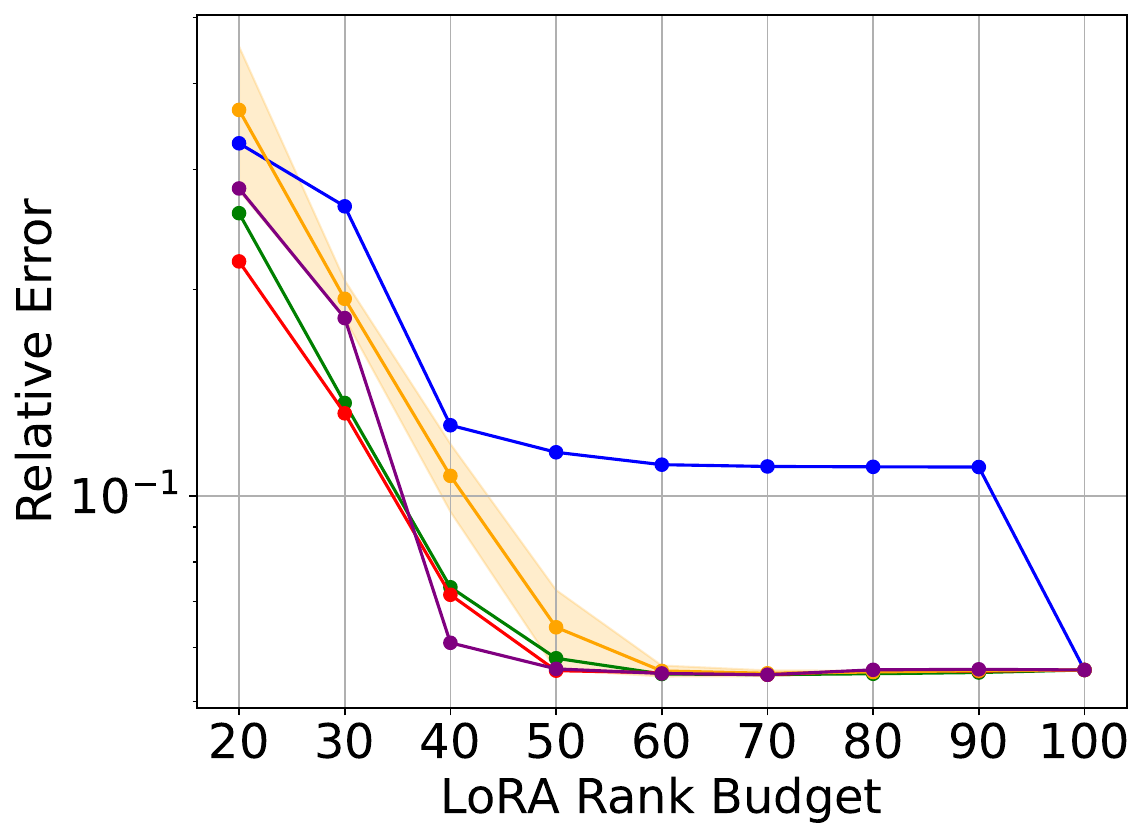} \label{fig_static_elliptic_1_2_error}}
    \caption{Performance comparison of different rank determination methods on elliptic equations under varying LoRA rank budgets and physical parameters. 
    % (Top row: training loss) Increasing the LoRA rank budget leads to reduced training loss across all settings. 
    % (Bottom row: relative error) Validation error shows consistent trends, confirming the effectiveness of the proposed rank determination method.
    }
    \label{fig_static_elliptic}
\end{figure}

\begin{table}[tp!]
\centering
\begin{tabular}{|l|l|l|l|l|l|l|l|l|l|}
\hline
\makecell{Methods \textbackslash Budgets $b$} & 20   & 30   & 40   & 50   & 60   & 70   & 80   & 90   & 100  \\ \hline
LinearLoRA                   & 0.53 & 0.54 & 0.53 & 0.53 & 0.54 & 0.53 & 0.53 & 0.53 & 0.52 \\ \hline
DiagLoRA                     & 1.39 & 1.42 & 1.44 & 1.42 & 1.40  & 1.40  & 1.42 & 1.39 & 1.40  \\ \hline
SubLoRA-G                    & 2.52 & 2.99 & 3.40  & 3.70  & 4.06 & 4.21 & 4.37 & 4.47 & 4.50  \\ \hline
SubLoRA-R                    & 4.88 & 5.15 & 6.04 & 6.64 & 7.11 & 7.58 & 7.72 & 7.91 & 8.02 \\ \hline
HessLoRA                     & 2.49 & 2.96 & 3.27 & 3.60  & 3.88 & 4.17 & 4.25 & 4.36 & 4.44 \\ \hline
\end{tabular}
\caption{Runtime comparison (seconds) across different rank determination methods on elliptic equations under varying LoRA rank budgets and $\blam=(1,1)$. We observe that any increase in runtime (if any) is minimal, even if the budget is increased. This highlights the efficiency of the greedy and the randomized greedy procedures.}
\label{table_runtime}
\end{table}

\subsubsection{Allen--Cahn Equations}
\label{sec_allen_cahn}
We consider a class of Allen--Cahn equations with varying physical parameters $\blam \in \mathbb{R}^2$:
\begin{equation}
    \begin{split}
        \frac{\partial u(t,\bm{x};\blam)}{\partial t} & - \Delta u(t,\bm{x};\blam) -u(t,\bm{x};\blam)^3 + u(t,\bm{x};\blam)\\
        & = g(t,\bm{x};\blam), \quad (t,\bm{x}) \in [0,1] \times \Omega,\\
        u(t,\bm{x};\blam) & = h_{1}(t,\bm{x};\blam), \quad (t,\bm{x}) \in [0,1] \times \partial \Omega,\\
        u(0,\bm{x};\blam) & = h_{2}(\bm{x};\blam), \quad \bm{x} \in \Omega,
    \end{split}
\end{equation}
where the temporal and  spatial domains are defined as $[0,1]$ and $\Omega = \{\bx \in \mathbb{R}^{2}: \left\|\bx \right\|_2 \leq 1\}$, respectively. Similarly, the exact solution $u(\bx;\blam)$ with the parameter $\blam$ is defined as $u(\bx;\blam) = e^{-t} \sin \left(\frac{\pi \lambda_1}{2}\left( 1 - \left\|\bm{x} \right\|_2\right)^{2.5} \right) + \lambda_2 \cdot e^{-t} \sin \left(\frac{\pi}{2}\left( 1 - \left\|\bm{x} \right\|_2\right) \right)$. 
We first pre-train an MLP model on the PDE with $\blam =(1,0)$, and then fine-tune it by standard LoRA on PDEs with $\blam=(1,1)$, $(1,5)$, and $(2,1)$. 
We adopt the same experimental setup and rank determination methods for comparison as described in \Cref{sec_elliptic}.
% After fine-tuning, we perform rank determination under a fixed total budget using the following methods: the linearized first-order method (LinearLoRA), the diagonal second-order approximation (DiagLoRA), the proposed submodular second-order method with greedy (SubLoRA-G) and randomized greedy (SubLoRA-R) solvers, and the exact second-order method without projection, solved using the greedy algorithm (HessLoRA-G).

We visualize the experimental results in \Cref{fig_static_allen-cahn}. Similar trends to those observed in the elliptic equation experiments are evident here, further validating the effectiveness of incorporating second-order information compared to using only first-order approximations. 
Notably, in these examples, SubLoRA-G slightly outperforms HessLoRA-G. This highlights the advantage of transforming the objective into a submodular function by the Hessian projection defined in \Cref{eq_projection_submodular}.
It is important to note that a general (non-submodular) quadratic objective is not guaranteed to perform well under the greedy algorithm, where the performance can degrade significantly in certain cases. However, by projecting the Hessian to enforce the submodularity, as done in \Cref{eq_projection_submodular}, we ensure that applying the greedy algorithm becomes theoretically sound and practically robust. Therefore, the use of Hessian projection followed by greedy algorithm, as implemented in \Cref{alg_rank_determination}, is a reasonable and effective strategy for rank determination.

\begin{figure}[tb!]
    \centering
    \subfigure[$\blam=(1,1)$, training loss]{%
        \includegraphics[width=0.30\textwidth]{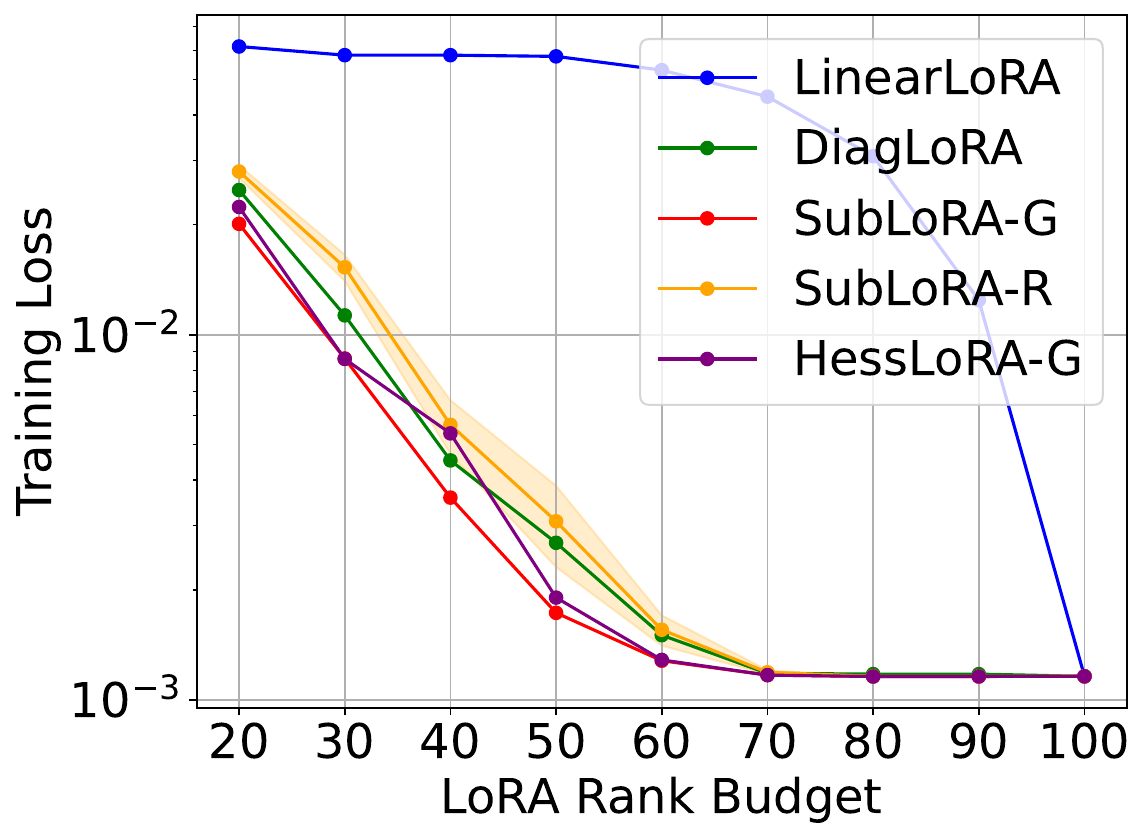} \label{fig_static_allen-cahn_1_loss}}
    % \hfill
    \subfigure[$\blam=(1,5)$, training loss]{%
        \includegraphics[width=0.30\textwidth]{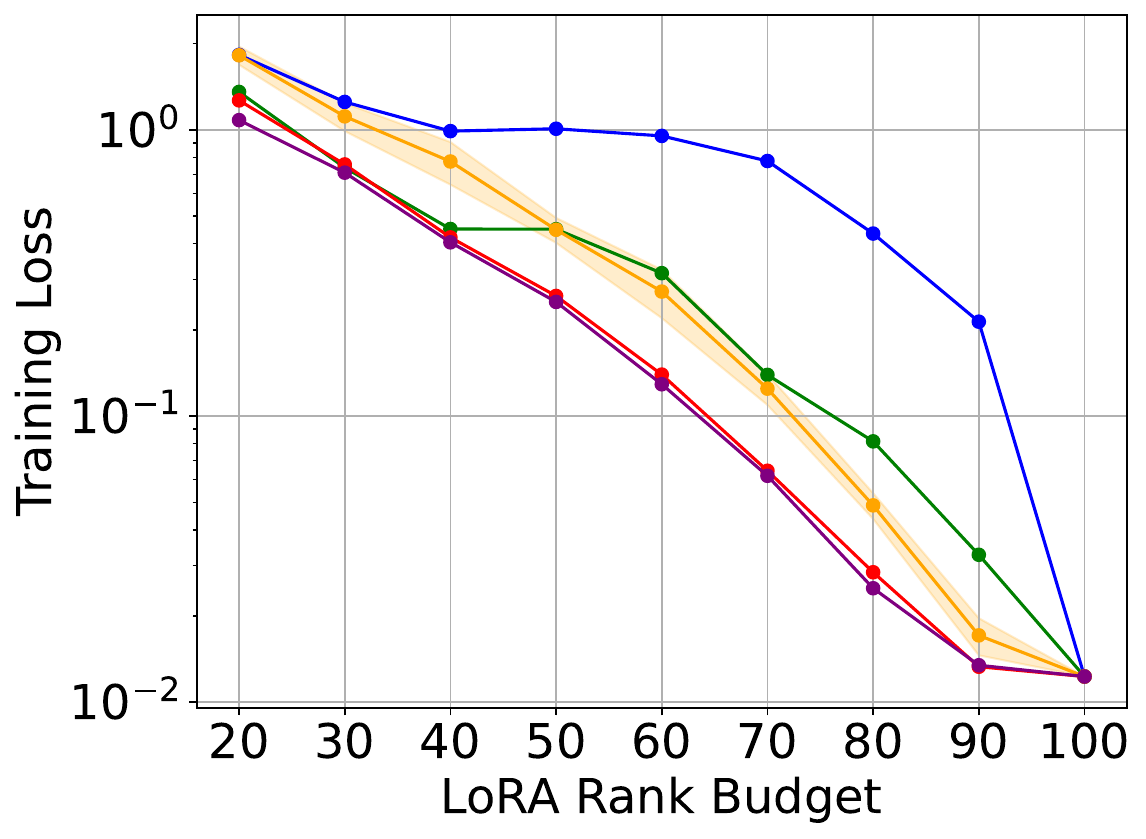} \label{fig_static_allen-cahn_5_loss}}
    \subfigure[$\blam=(2,1)$, training loss]{%
        \includegraphics[width=0.30\textwidth]{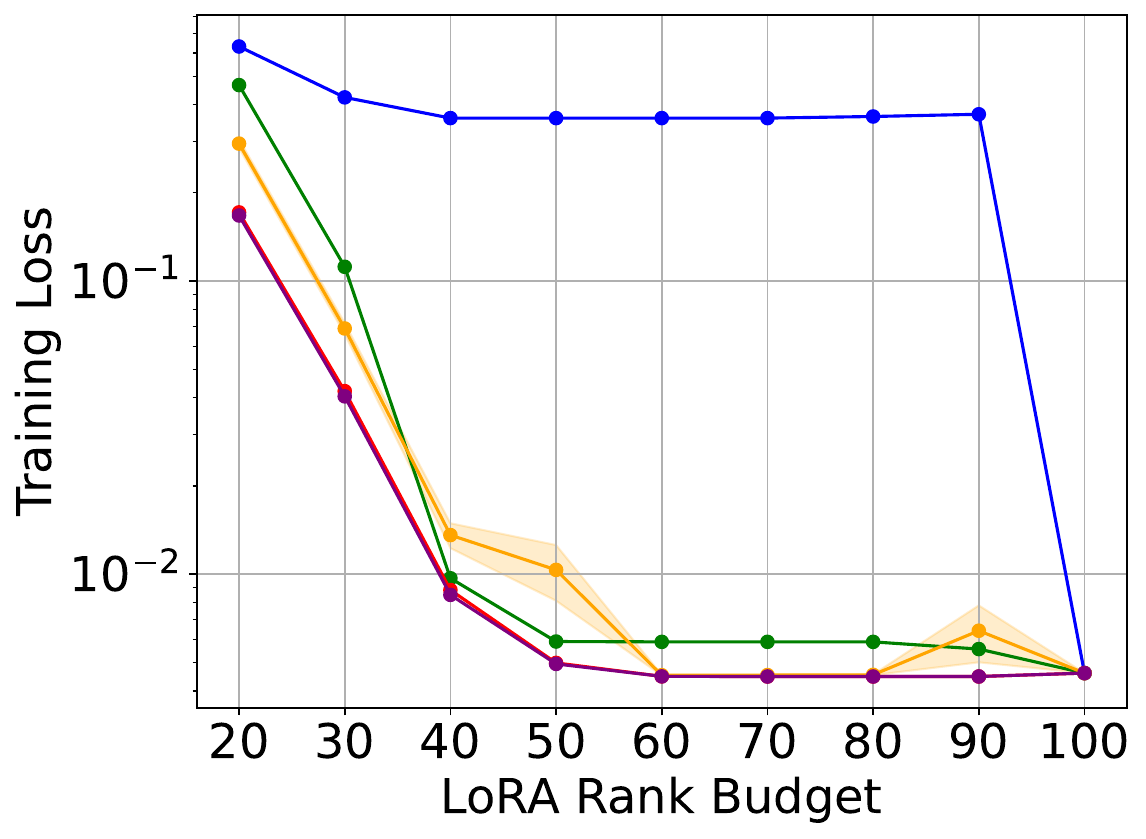} \label{fig_static_allen-cahn_1_2_loss}}\\
    \subfigure[$\blam=(1,1)$, relative error]{%
        \includegraphics[width=0.30\textwidth]{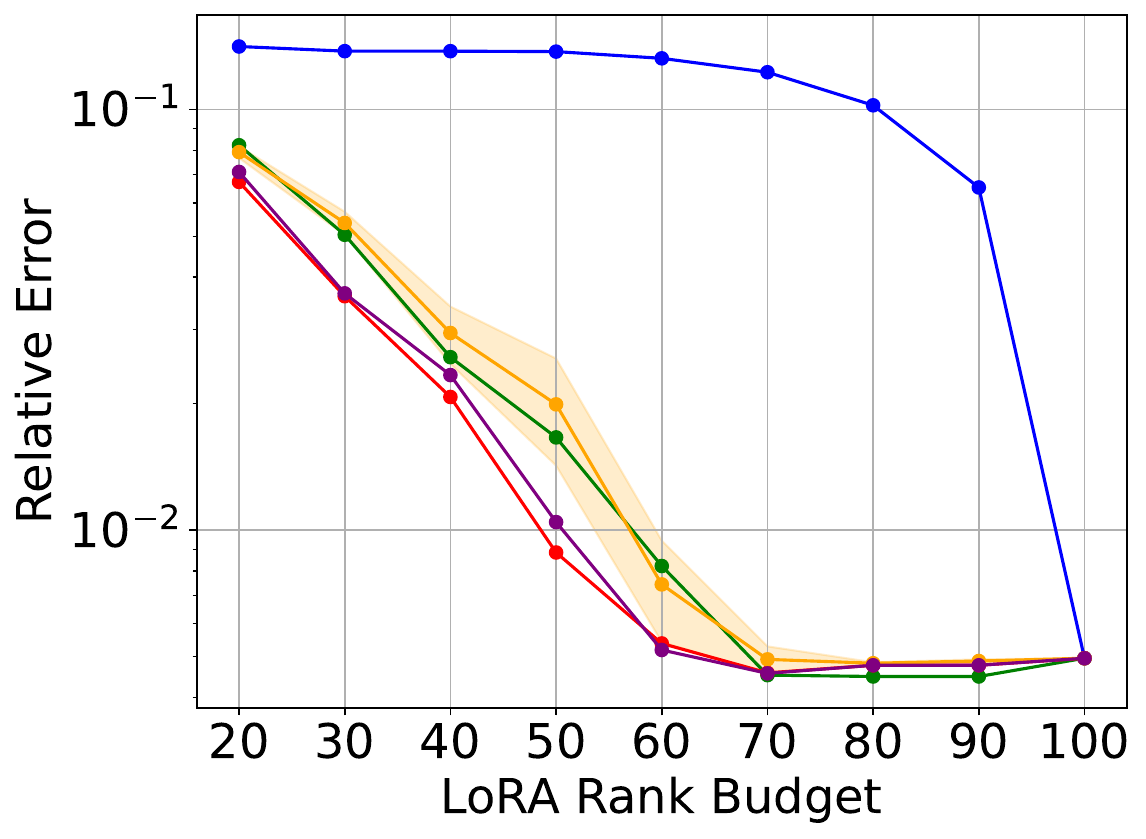} \label{fig_static_allen-cahn_1_error}}
    \subfigure[$\blam=(1,5)$, relative error]{%
        \includegraphics[width=0.30\textwidth]{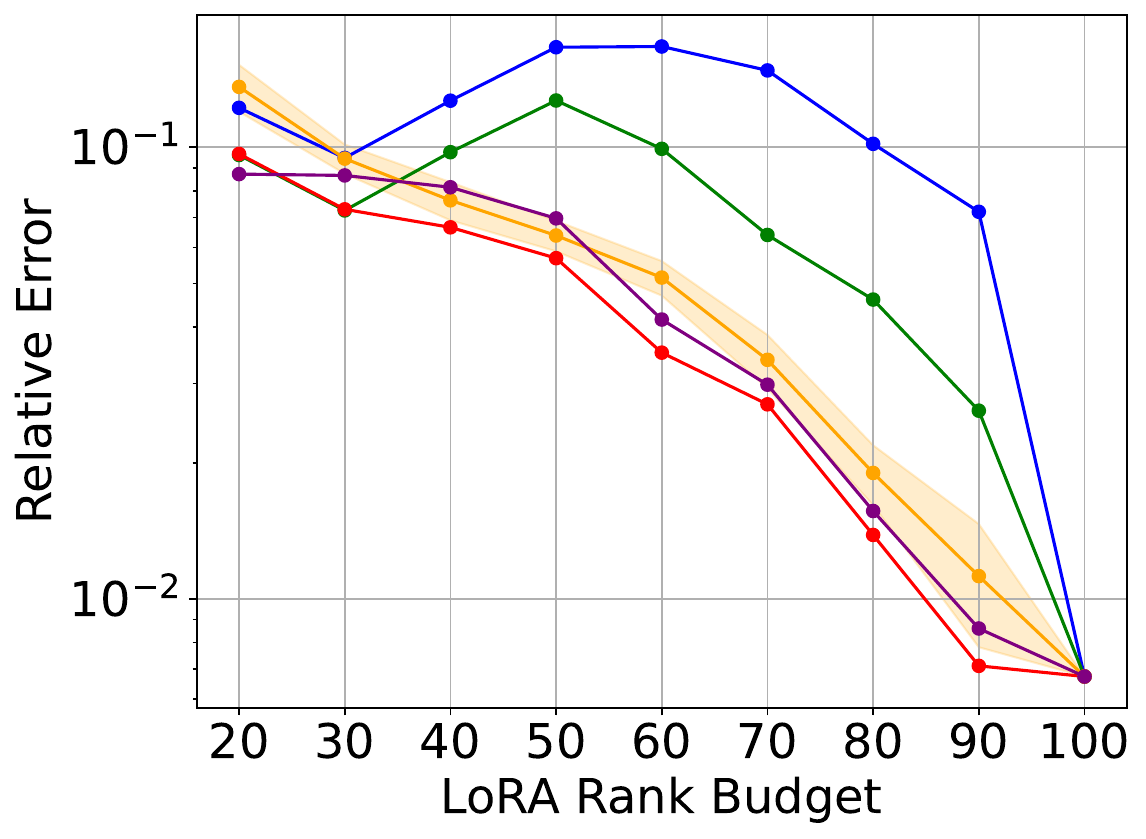} \label{fig_static_allen-cahn_5_error}}
    \subfigure[$\blam=(2,1)$, relative error]{%
        \includegraphics[width=0.30\textwidth]{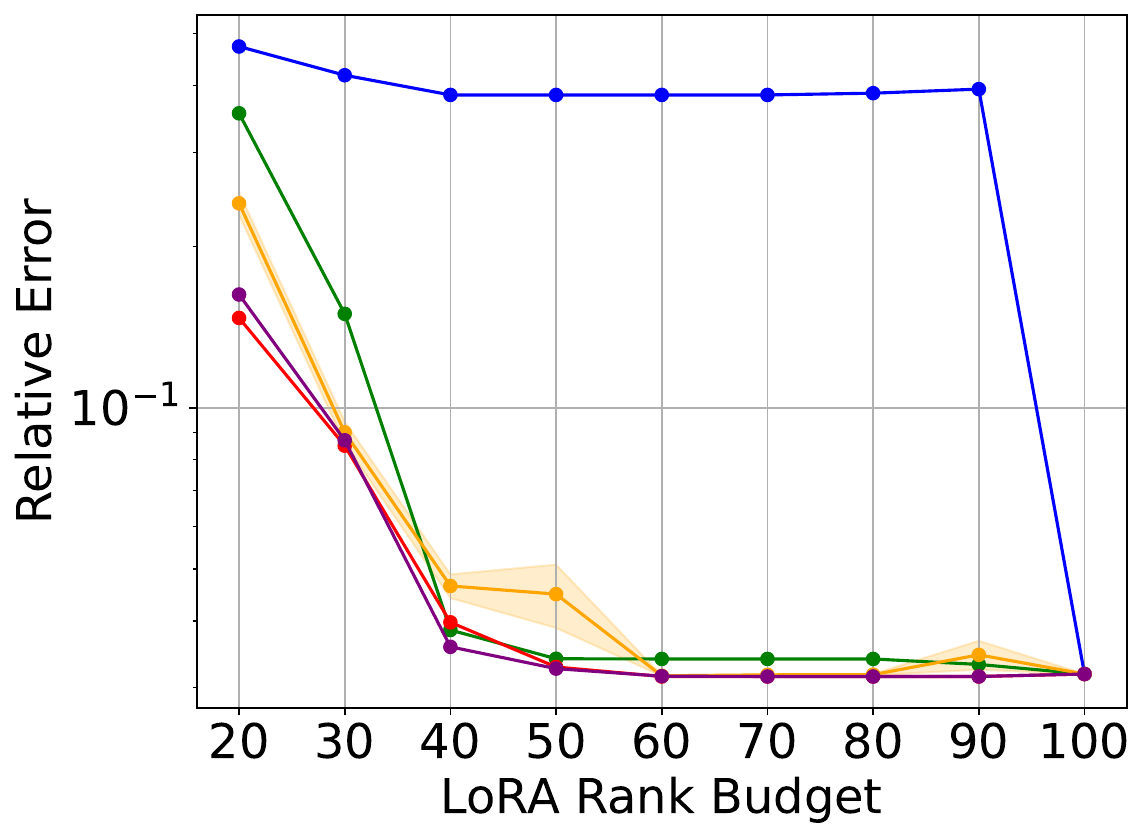} \label{fig_static_allen-cahn_1_2_error}}
    \caption{
    Performance comparison of different rank determination methods on Allen--Cahn equations under varying LoRA rank budgets and physical parameters. 
    % Relationship between the budget of LoRA ranks and the performances on solving allen-cahn equations. (a) First row (training loss): The training loss decreases with an increasing LoRA rank budget. (b) Second row (relative error): The validation error also shows similar trajectories.
    }
    \label{fig_static_allen-cahn}
\end{figure}

\subsubsection{Hyperbolic Equations}
We consider a class of hyperbolic equations with varying physical parameters $\blam \in \mathbb{R}^{2}$:
\begin{align}
        \frac{\partial^2 u(t,\bm{x};\blam)}{\partial t^2} - \Delta u(t,\bm{x};\blam) & = g(t,\bm{x};\blam), \quad (t,\bm{x}) \in [0,1] \times \Omega, \nonumber\\
        u(t,\bm{x};\blam) & = h_1(t,\bm{x};\blam), \quad (t,\bm{x}) \in [0,1] \times \partial \Omega,\nonumber\\
        u(0,\bm{x};\blam) & = h_2(\bm{x};\blam), \quad \bm{x} \in \Omega,\nonumber\\
        \frac{u(0,\bm{x};\blam)}{\partial t} & = h_3(\bm{x};\blam), \quad \bm{x} \in \Omega,
\end{align}
where the temporal and the spatial domains are defined as $[0,1]$ and $\Omega = \{\bx \in \mathbb{R}^{2}: \left\|\bx \right\|_2 \leq 1\}$, respectively. The exact solution $\bu(\bx;\blam)$ with the parameter $\blam$ is defined as $u(\bx;\blam) = \big(e^{t^2}-1\big) \sin \big(\frac{\pi \lambda_1}{2}\big( 1 - \left\|\bm{x} \right\|_2 \big)^{2.5} \big) + \lambda_2 \cdot \big(e^{t^2}-1\big) \sin \big(\frac{\pi}{2}\big( 1 - \left\|\bm{x} \right\|_2\big ) \big)$. 
We adopt experimental settings similar to those in \Cref{sec_elliptic} and \Cref{sec_allen_cahn}, including model architecture, pre-training setup, LoRA fine-tuning procedures, and rank determination baselines for comparison.

We observe similar performance trends for hyperbolic equations as those seen in the elliptic and Allen--Cahn cases, and thus omit redundant details here. 
In \Cref{fig_static_hyperbolic_1_2_loss} and \Cref{fig_static_hyperbolic_1_2_error}, DiagLoRA performs comparably to SubLoRA-G. We verify that this is primarily because the Hessian matrix in this example is diagonally dominant, where its diagonal entries capture most of the curvature information, while the off-diagonal terms contribute marginally. 
It also explains the close alignment between the performance of SubLoRA-G and HessLoRA-G, where the projection in \Cref{eq_proj_hessian} for the off-diagonal elements of Hessian makes a less significant effect.

\begin{figure}[tb!]
    \centering
    \subfigure[$\blam=(1,1)$, training loss]{%
        \includegraphics[width=0.30\textwidth]{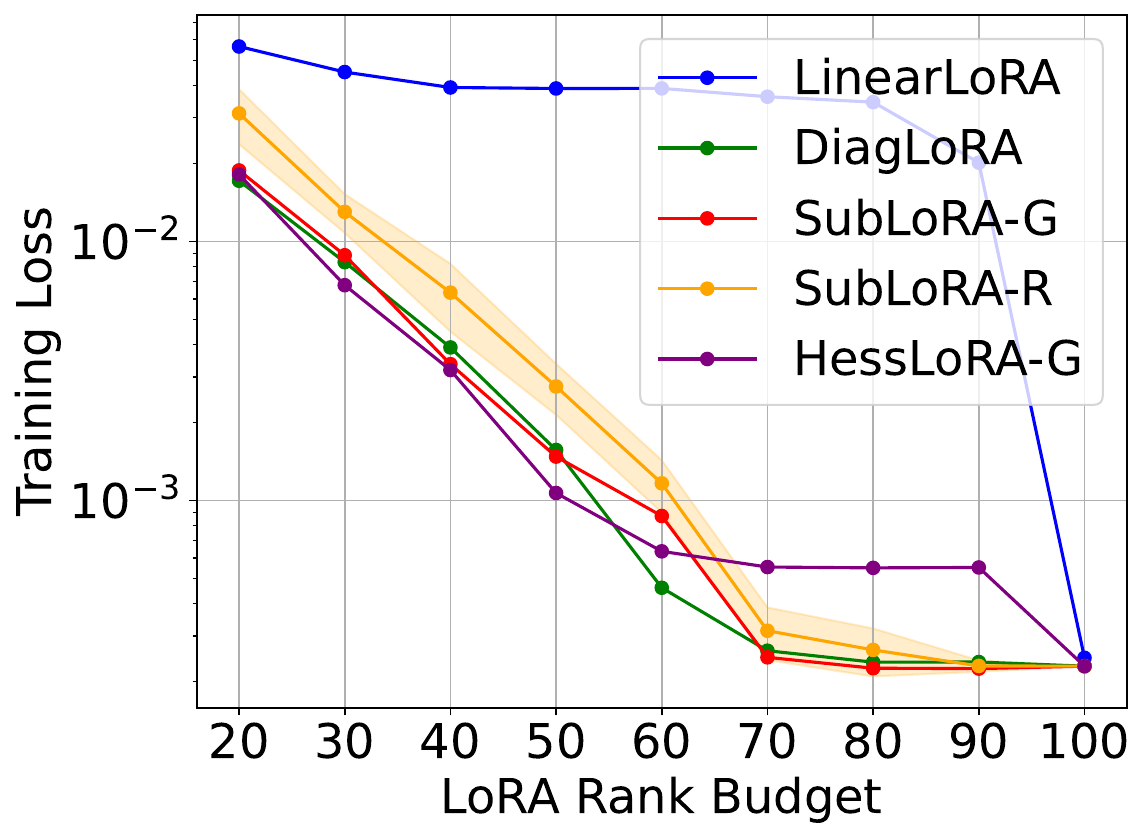} \label{fig_static_hyperbolic_1_loss}}
    % \hfill
    \subfigure[$\blam=(1,5)$, training loss]{%
        \includegraphics[width=0.30\textwidth]{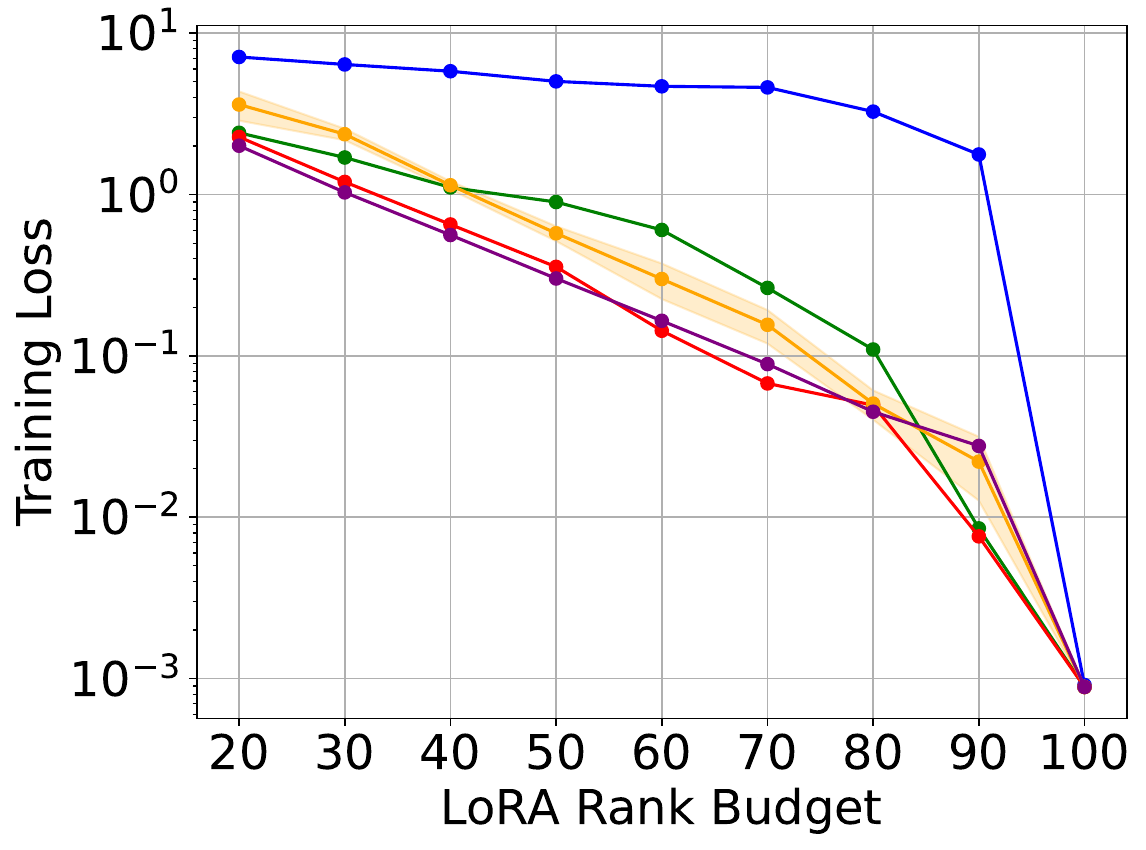} \label{fig_static_hyperbolic_5_loss}}
    \subfigure[$\blam=(2,1)$, training loss]{%
        \includegraphics[width=0.30\textwidth]{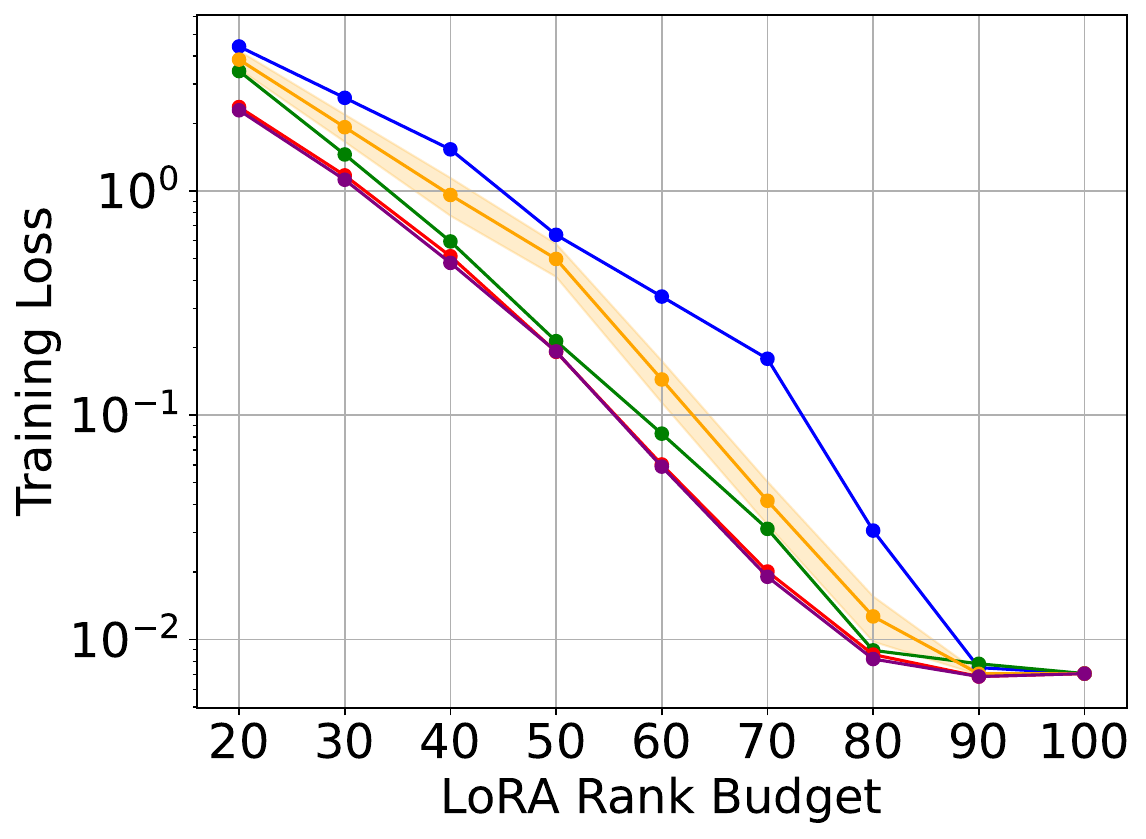} \label{fig_static_hyperbolic_1_2_loss}}\\
    \subfigure[$\blam=(1,1)$, relative error]{%
        \includegraphics[width=0.30\textwidth]{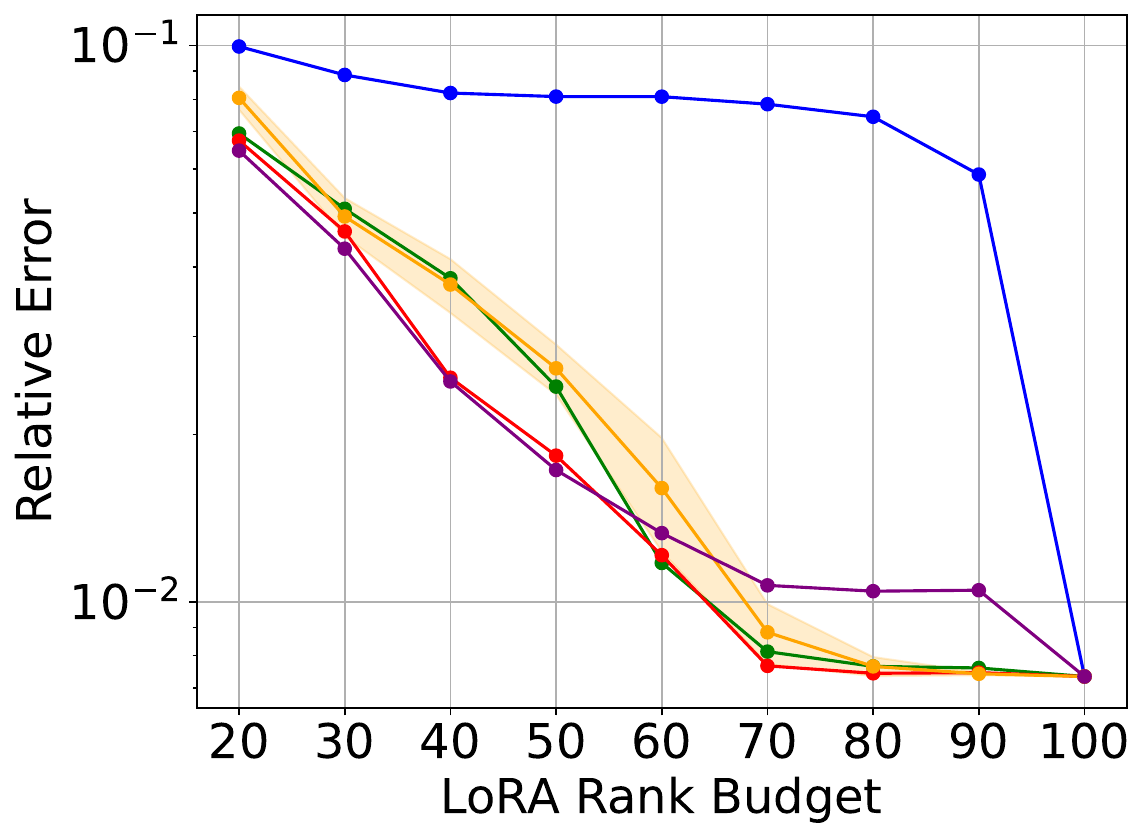} \label{fig_static_hyperbolic_1_error}}
    \subfigure[$\blam=(1,5)$, relative error]{%
        \includegraphics[width=0.30\textwidth]{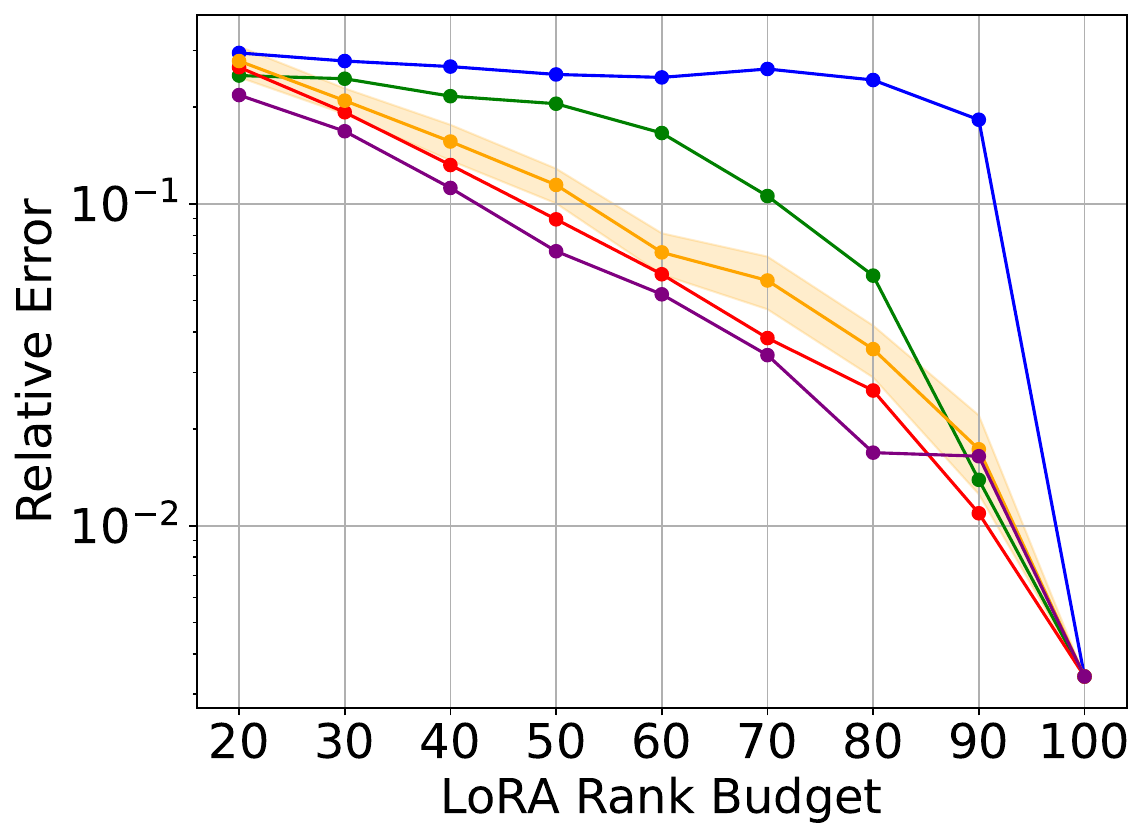} \label{fig_static_hyperbolic_5_error}}
    \subfigure[$\blam=(2,1)$, relative error]{%
        \includegraphics[width=0.30\textwidth]{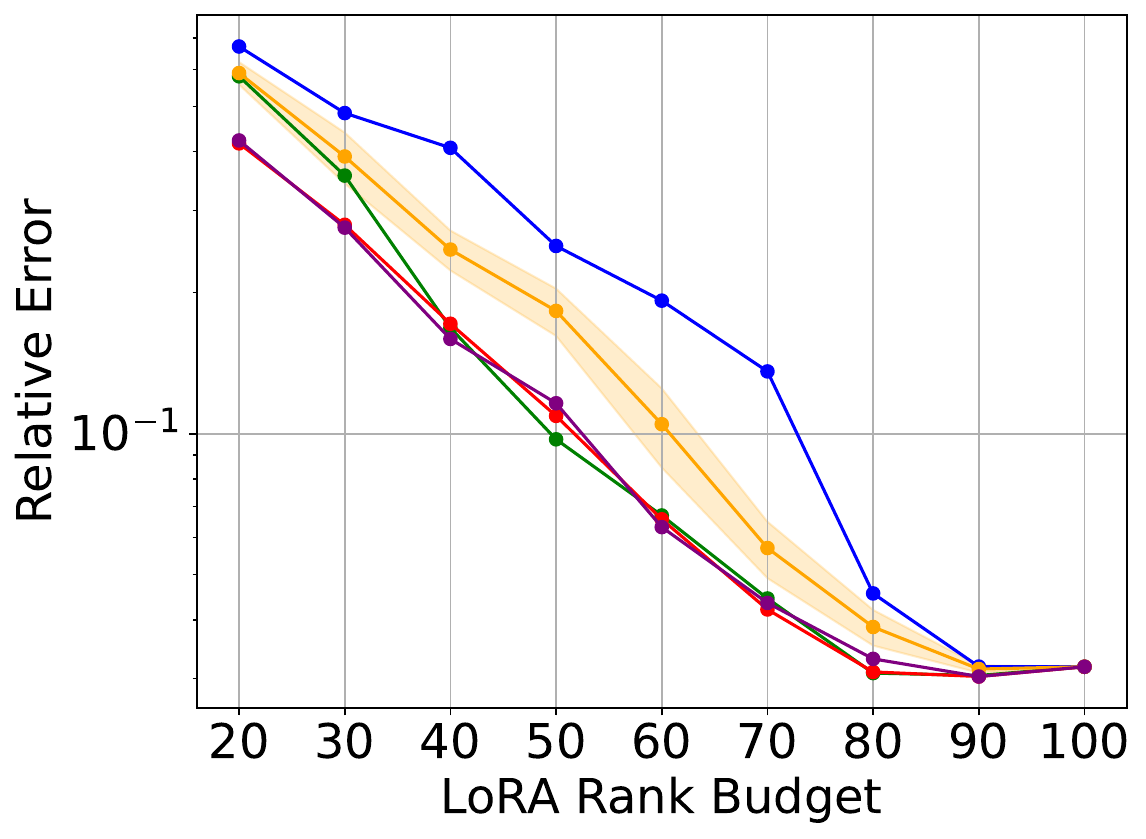} \label{fig_static_hyperbolic_1_2_error}}
    \caption{
    Performance comparison of different rank determination methods on hyperbolic equations under varying LoRA rank budgets and physical parameters. 
    % Relationship between the budget of LoRA ranks and the performances on solving hyperbolic equations. (a) First row (training loss): The training loss decreases with an increasing LoRA rank budget. (b) Second row (relative error): The validation error also shows similar trajectories.
    }
    \label{fig_static_hyperbolic}
\end{figure}

\subsection{Alternating Training with Rank Determination}
We also evaluate the alternating algorithm for LoRA fine-tuning and rank determination, as described in \Cref{alg_lora_update_rank_determination}, on a class of PDEs with varying physical parameters (detailed in \Cref{sec_applications}). The same PDE types and tasks used in \Cref{sec_experiments_training_free} are considered.
We begin by pre-training neural networks (MLPs) to obtain the pre-trained parameters $\Theta_{\text{pt}}$. Then, we apply \Cref{alg_lora_update_rank_determination} to alternate between LoRA parameter updates and rank determination.
In all experiments, we use a four-layer MLP (three hidden layers) as the base architecture for PINN training. Due to the low-dimensional input and output, LoRA fine-tuning is applied only to the hidden layers. Each hidden layer has a width of $1000$, and LoRA is initially applied with a rank of $50$ per layer, giving a total initial LoRA rank of $100$. However, the total rank budget is set to $40$, requiring allocation across the model’s layers.
During each outer-loop iteration, Stage 1 of \Cref{alg_lora_update_rank_determination} performs LoRA parameter updates using the Adam optimizer for 100 epochs. In Stages 2 and 3, rank determination methods are applied to prune less informative singular values, ensuring that the number of non-zero singular values remains within the rank budget. This alternating procedure is repeated for $T = 5$ iterations.

The performance of the alternating algorithms using different rank determination methods is summarized in \Cref{tab_compare_alternating}.
We observe that the proposed alternating SubLoRA method based on submodular function maximization consistently achieves the lowest training loss and validation error. 
These results highlight the effectiveness of incorporating Hessian information, which enables a more accurate characterization of the complex loss landscape. Furthermore, the Hessian projection used to construct a submodular objective allows the greedy algorithm to serve as a reliable solver for the formulated combinatorial optimization problem.

To further illustrate the training dynamics, we visualize the optimization trajectories of \Cref{alg_lora_update_rank_determination} in \Cref{fig_training_allen-cahn_1.0_loss_alternating}, comparing the alternating LinearLoRA (first-order) and the alternating SubLoRA (second-order) methods. In the figure, circular markers represent the outputs of Stage 1 (parameter updates), while cross markers indicate the results after rank determination in Stages 2 and 3.
The visualization shows that the proposed SubLoRA method facilitates more accurate rank determination, which in turn guides Stage 1 to progressively converge to parameter configurations that better align with the rank budget. This interaction leads to stable convergence of the alternating process. In contrast, the first-order method of LinearLoRA fails to decide LoRA ranks effectively, often discarding important components, thereby disrupting learning and preventing convergence to a meaningful low-rank structure.

We also conduct ablation studies on the LoRA rank budget $b$ within the alternating SubLoRA algorithm. The training dynamics for solving the Allen--Cahn equations with $\blam=(1,1)$ are shown in \Cref{fig_training_allen-cahn_1.0_loss_alternating_ablation_b}. When the budget is set to $b=20$, we observe a slow convergence or even divergence and a noticeably higher training loss. This is because the rank budget underestimates the capacity needed to represent the new PDE solution, leading to limited expressiveness and suboptimal fine-tuning. Increasing the budget to $b=40$ results in a significant improvement, where SubLoRA converges reliably and achieves a much lower training loss. Further increasing the budget to $b=80$ does not lead to additional gains, with the performance closely matching that of $b=40$. This suggests that $b=80$ is an overestimation and that a budget of $b=40$ is sufficient to capture the difference between the source and target PDE solutions. These observations justify our choice of $b=40$ in the main experiments. Moreover, the comparable performance between $b=40$ and $b=80$ highlights the robustness of SubLoRA to overestimated budgets, effectively identifying and retaining only the essential components while discarding redundant ones.

Note that in each outer iteration of the joint training procedure, alternating between LoRA parameter updates and rank determination, the rank determination step is performed only once. This indicates that the majority of the computational cost arises from the LoRA fine-tuning itself, rather than from rank determination. As shown in \Cref{table_runtime}, SubLoRA introduces only a marginal increase in runtime  (approximately four seconds) compared to LinearLoRA and DiagLoRA. This overhead is negligible in the context of the full alternating algorithm (\Cref{alg_lora_update_rank_determination}). Therefore, our method achieves comparable efficiency to LinearLoRA and DiagLoRA while providing significantly improved performance.

\begin{table*}[htb!]
\centering
% \yh{
\begin{tabular}{|l|l|ll|ll|ll|}
\hline
\multirow{2}{*}{PDE Types} &
\multirow{2}{*}{Methods} & \multicolumn{2}{l|}{$\blam=(1,1)$}                                      &  \multicolumn{2}{l|}{$\blam=(1,5)$}                                     & \multicolumn{2}{l|}{$\blam=(2,1)$}  \\
\cline{3-8}
& & loss & rel (\%) & loss & rel (\%)  & loss & rel (\%) \\
\hline
\multirow{4}{*}{elliptic} & LinearLoRA & 2.56E-3 & 0.77 & 2.11E-2	& 1.81 & 4.82E-1 & 	7.65 \\
& DiagLoRA & 2.63E-3 & 0.79 & 7.56E-3 &	0.18 & 1.29E-1 & 5.96\\
& SubLoRA-G & 1.39E-3 & 0.41 & 6.26E-3 & 0.17 & 1.11E-1 & 5.43\\
& SubLoRA-R & 1.62E-3 &	0.48 & 1.08E-2 & 0.56 & 1.10E-1 & 5.48\\
\hline
\multirow{4}{*}{Allen--Cahn} & LinearLoRA & 7.90E-3 & 3.85 & 3.32E-1 & 6.81 & 8.05E-3 & 5.04\\
& DiagLoRA & 1.40E-3 & 0.60 & 1.82E-2 & 0.93 & 2.89E-3 & 3.27\\
& SubLoRA-G & 3.10E-3 & 0.55 & 1.00E-2 & 0.69 & 2.21E-3 & 3.21\\
& SubLoRA-R & 1.83E-3 & 1.01 & 1.38E-2 & 0.79 & 2.38E-3 & 3.22\\
\hline
\multirow{4}{*}{hyperbolic} & LinearLoRA & 2.72E-2 & 6.61 & 8.47E-1 & 15.58 & 1.11E-1 & 6.55\\
& DiagLoRA & 9.47E-4 & 2.07 & 1.34E-1 & 6.77 & 7.69E-2 & 5.83\\
& SubLoRA-G & 4.76E-4 & 1.11 & 3.43E-2 & 2.24 & 2.97E-2 & 4.36\\
& SubLoRA-R & 1.11E-3 &	2.15 & 9.05E-2 & 4.78 & 8.55E-2 & 6.28\\
\hline
\end{tabular}
% }
\caption{Loss and relative error (rel) of the alternating algorithms for solving elliptic, Allen–Cahn, and hyperbolic equations under varying physical parameters.}
\label{tab_compare_alternating}
\end{table*}

\begin{figure}
    \centering
    \includegraphics[width=0.5\linewidth]{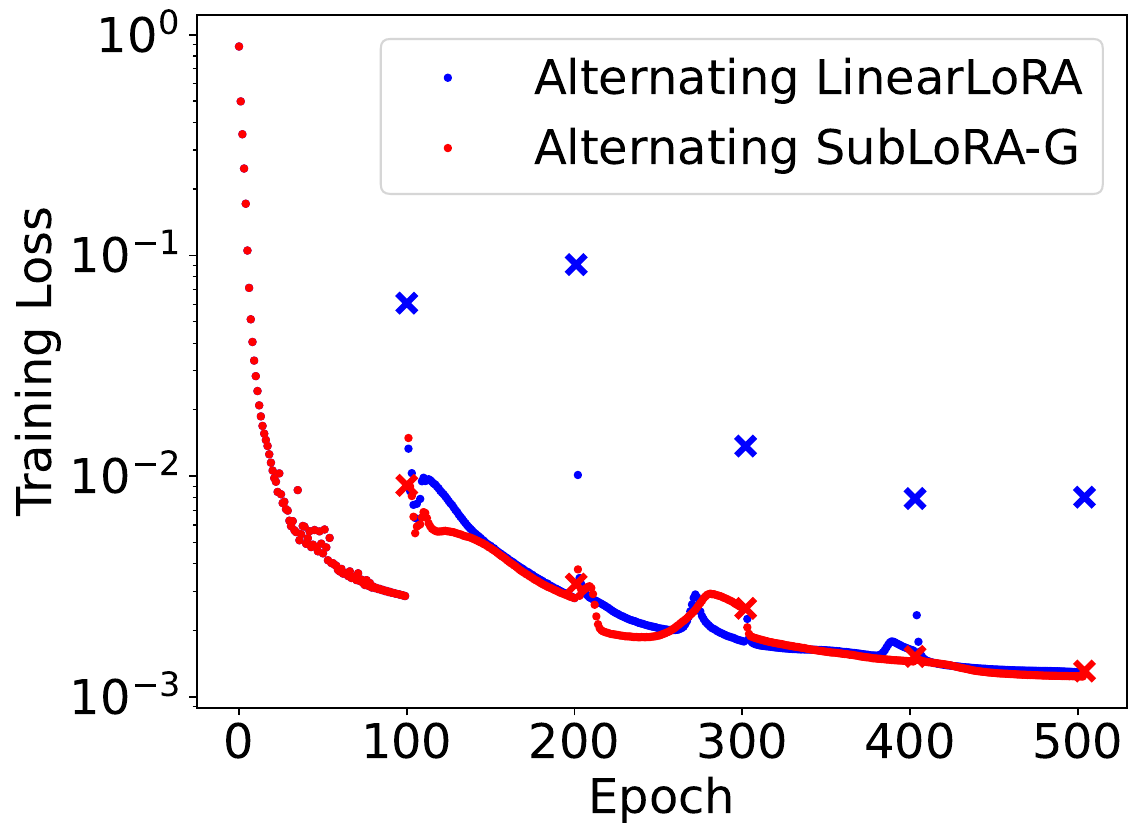}
    \caption{
    Training trajectories of \Cref{alg_lora_update_rank_determination} using (first-order) LinearLoRA and (second-order) SubLoRA-G as rank determination methods on Allen--Cahn equations with $\blam=(1,1)$. The SubLoRA-G method leads to more reliable rank determination, which effectively guides the alternating optimization to convergence. In contrast, the LinearLoRA method tends to discard critical components, resulting in divergence of the training process.
    % Training trajectories of \Cref{alg_lora_update_rank_determination} with first-order and submodular second-order rank determination methods. The results show that the submodular second-order method provides more convincing rank determination and thus guide the alternating algorithm to converge. However, the first-order method discards important components and the alternating algorithm diverges.
    }
    \label{fig_training_allen-cahn_1.0_loss_alternating}
\end{figure}

\begin{figure}
    \centering
    \includegraphics[width=0.5\linewidth]{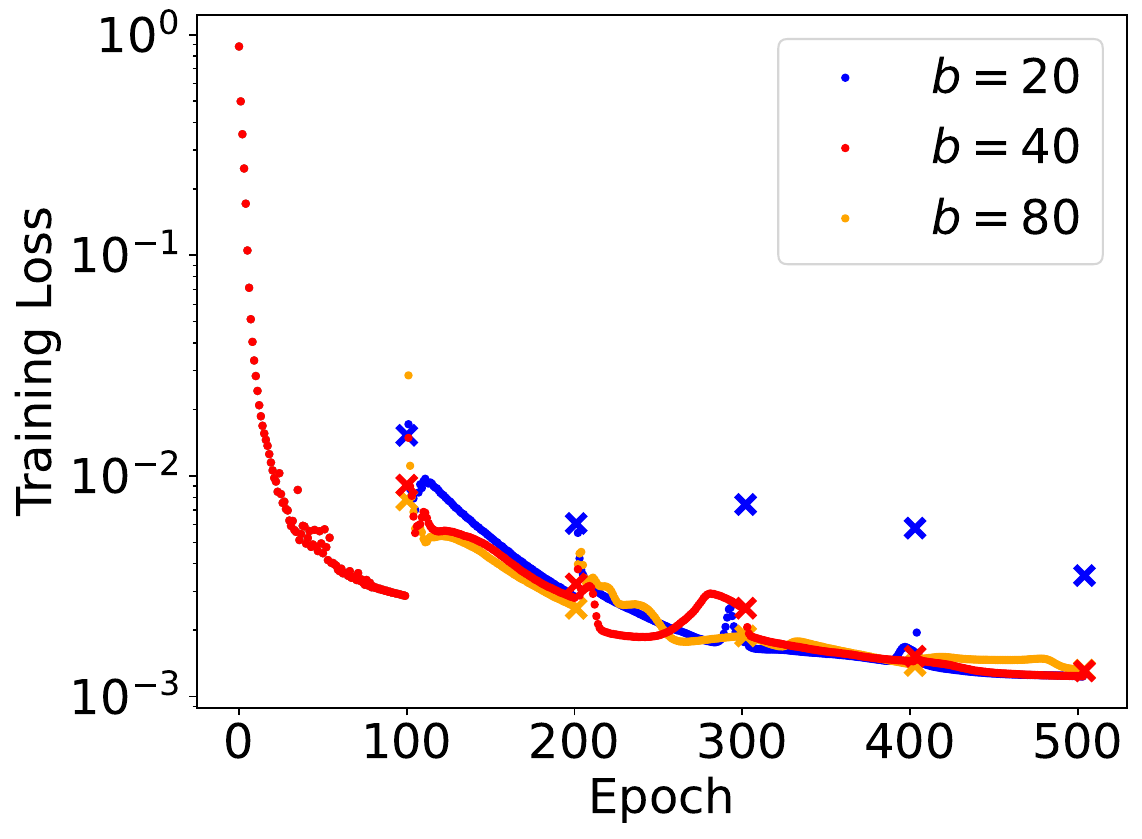}
    \caption{
    Training trajectories of \Cref{alg_lora_update_rank_determination} using SubLoRA-G as the rank determination method on Allen--Cahn equations with $\blam=(1,1)$ and varying LoRA rank budgets $b=20, 40, 80$. 
    A small budget ($b=20$) leads to underfitting and higher training loss due to limited expressiveness. Increasing the budget to $b=40$ leads to convergence and low training loss. Further increasing to $b=80$ results in similar performance, indicating that $b=40$ is sufficient. These results highlight SubLoRA’s robustness to overestimated budgets and its ability to allocate rank effectively.
    }
    
    \label{fig_training_allen-cahn_1.0_loss_alternating_ablation_b}
\end{figure}

\section{Conclusion}
\label{sec_conclusion}

In this paper, we introduce SubLoRA, a rank determination method for LoRA based on submodular function maximization. 
We formulate the rank determination problem as a combinatorial optimization problem with a set-valued quadratic objective derived from the second-order Taylor expansion of the fine-tuning loss. 
To address the computational challenges of optimizing this objective, we design a Hessian projection that transforms the objective into a submodular function. The transformation enables the use of efficient greedy algorithms with theoretical approximation guarantees, making the method both computationally tractable and effective.
To further enhance LoRA fine-tuning with rank determination, we propose an alternating algorithm that iteratively updates LoRA parameters and performs rank determination.
We apply SubLoRA to LoRA fine-tuning and rank determination of PINNs for solving a class of PDEs due to their inherent smoothness. Experimental results demonstrate that our method consistently outperforms first-order and diagonal second-order baselines. 
The use of Hessian information and the submodular projection leads to better capturing of the loss landscape and more effective rank allocation under budget constraints.

Although this work focuses mainly on designing SubLoRA and applying it to PINNs, where smoothness assumptions naturally exist due to the use of second-order information, there are promising extensions to other domains. 
In large language models, for example, the prevalent use of ReLU activations, which are not second-order differentiable, makes direct Hessian computation less suitable. However, replacing ReLU with smooth activation functions may allow the application of our method in this context. 
Similar adaptations could enable extensions to vision models and multimodal architectures. We believe SubLoRA provides a general framework for effective LoRA rank determination that is broadly applicable across domains.

\newpage 
\bibliographystyle{plain}
\bibliography{main}

\end{document}